\newcommand{\argmin}{\mathop{\rm argmin}}
\newcommand{\dquote}[1]{``#1''}
\newcommand{\labs}{\left\vert}
\newcommand{\rabs}{\right\vert}
\newcommand{\real}{\mathbb{R}}
\newcommand{\expect}{\mathbb{E}}
\newtheorem{thm}{Theorem}
\newtheorem{rem}{Remark}
\newtheorem{obs}{Observation}
\crefname{thm}{Theorem}{Theorems}
\crefname{lem}{Lemma}{Lemmas}
\crefname{cor}{Corollary}{Corollaries}
\crefname{prop}{Proposition}{Propositions}
\crefname{asmp}{Assumption}{Assumptions}
\crefname{defn}{Definition}{Definitions}
\crefname{oracle}{Oracle}{Oracles}
\crefname{fact}{Fact}{Facts}
\crefname{conj}{Conjecture}{Conjectures}
\crefname{rem}{Remark}{Remarks}
\crefname{example}{Example}{Examples}
\crefname{condition}{Condition}{Conditions}
\crefname{exercise}{Exercise}{Exercises}
\crefname{algorithm}{Algorithm}{Algorithms}
\crefname{table}{Table}{Tables}
\crefname{figure}{Figure}{Figures}
\crefname{section}{Section}{Sections}
\crefname{subsection}{Section}{Sections}
\crefname{appendix}{Appendix}{Appendices}
\crefname{message}{Message}{Messages}
\crefname{obs}{Observation}{Observations}
\definecolor{red}{rgb}{1, 0, 0}
\definecolor{green}{rgb}{0, 1, 0}
\definecolor{blue}{rgb}{0, 0, 1}
\definecolor{orange}{rgb}{1, 0.4, 0.0}
\title{Rethinking ValueDice: \\ Does It Really Improve Performance?}
\newcommand{\expert}{\operatorname{exp}}
\newcommand{\piE}{\pi^{\expert}}
\newcommand{\dice}{\operatorname{DICE}}
\newcommand{\mix}{\operatorname{mix}}
\newcommand{\dpi}{d^{\pi}}
\newcommand{\dexp}{d^{\operatorname{exp}}}
\newcommand{\dmix}{d^{\mix}}
\newcommand{\drb}{d^{\operatorname{RB}}}
\newcommand{\offline}{\normalfont \texttt{offline}}
\author{Ziniu Li\thanks{Equal contribution.} \\
	Shenzhen Research Institute of Big Data \\ The Chinese University of Hong Kong, Shenzhen \\
	\texttt{ziniuli@link.cuhk.edu.cn} \\
	\And
	Tian Xu$^{*}$ \\ 
    National Key Laboratory for Novel Software Technology \\
    Nanjing University \\
	\texttt{xut@lamda.nju.edu.cn} \\
	\And 
	Yang Yu\thanks{Corresponding author.} \\ 
    National Key Laboratory for Novel Software Technology \\
    Nanjing University \\
	\texttt{yuy@nju.edu.cn} \\
	\And
	Zhi-Quan Luo$^{\dag}$ \\
	Shenzhen Research Institute of Big Data \\ The Chinese University of Hong Kong, Shenzhen \\
	\texttt{luozq@cuhk.edu.cn} \\
}
\begin{document}
\maketitle

\setcounter{footnote}{0}

\begin{abstract}
Since the introduction of GAIL, adversarial imitation learning (AIL) methods attract lots of research interests. Among these methods, ValueDice has achieved significant improvements: it beats the classical approach Behavioral Cloning (BC) under the offline setting, and it requires fewer interactions than GAIL under the online setting. Are these improvements benefited from more advanced algorithm designs? We answer this question by the following conclusions.

First, we show that ValueDice could reduce to BC under the offline setting. Second, we verify that overfitting exists and regularization matters in the low-data regime. Specifically, we demonstrate that with weight decay, BC also nearly matches the expert performance as ValueDice does. The first two claims explain the superior offline performance of ValueDice. Third, we establish that ValueDice does not work when the expert trajectory is subsampled. Instead, the mentioned success of ValueDice holds when the expert trajectory is complete, in which ValueDice is closely related to BC that performs well as mentioned. Finally, we discuss the implications of our research for imitation learning studies beyond ValueDice\footnote{This paper is presented at the blog track of the 10th international conference on learning representations (ICLR), 2022. Link: \url{https://iclr-blog-track.github.io/2022/03/25/rethinking-valuedice/}. }.

\end{abstract}

\section{Introduction}

Many practical applications involve sequential decision-making. For these applications, an agent implements a policy to select actions and maximize the long-term return. Imitation learning approaches obtain the optimal policy from expert demonstrations \citep{argall2009survey, hussein2017survey, osa2018survey}. Imitation learning has been successfully applied in game~\citep{ross11dagger, silver2016mastering}, recommendation system~\citep{shi2019taobao, chen2019recomendation}, and robotics~\citep{levin16_end_to_end, finn16guided_cost_learning}, etc.

One of the milestones in imitation learning is the introduction of generative adversarial imitation learning (GAIL) \citep{ho2016gail}. Different from the classical approach Behavioral Cloning (BC) \citep{Pomerleau91bc} that trains a policy via \emph{supervised learning}, GAIL performs \emph{state-action distribution matching} in an adversarial manner \citep{ghasemipour2019divergence, ke19imitation_learning_as_f_divergence, xu2020error}. Even when expert demonstrations are scarce, GAIL is empirically shown to match the expert performance while BC cannot \citep{ho2016gail}. However, the price is that GAIL requires a number of environment interactions (e.g., 3M for Gym MuJoCo locomotion tasks), which restricts GAIL to be applied under the online setting. In contrast, BC does not require any interaction thus BC is widely applied under the offline setting. 

\textbf{Context.} The remarkable achievements of GAIL attract lots of research interests in adversarial imitation learning (AIL) approaches. Among these methods, ValueDice \citep{Kostrikov20value_dice} has achieved significant improvements. On the one hand, ValueDice beats BC under the offline setting. On the other hand, ValueDice outperforms GAIL and another off-policy AIL algorithm DAC \citep{Kostrikov19dac} in terms of interaction efficiency. All existing results suggest ValueDice is perfect. This motivates the central question in this manuscript: are these improvements benefited from more advanced algorithm designs?

Before answering the raised question, we have to carefully examine the achievements of ValueDice.  For the comparison with BC, the empirical advance of ValueDice seems to contradict the recent lower bound that BC is (nearly) minimax optimal under the offline setting \citep{rajaraman2020fundamental, xu2021error}. In other words, this theoretical result suggests that no algorithm can beat BC \emph{in the worst case} under the offline setting. For the online part, it is well accepted that optimization of Bellman backup objectives (e.g., temporal-difference learning and Q-learning) with function approximation can easily diverge (see the divergence example in \citep{baird1995residual, tsitsiklis1997analysis}). To address this divergence issue, \emph{target network} is proposed in \citep{mnih2015human} and this technique is widely applied in deep RL \citep{lillicrap2015ddpg, fujimoto2018td3, haarnoja2018sac}; see \citep{lee2019target, zhang2021breaking, agarwal2021online, chen2022target, li2022note} for the explanation about why the target network can address the divergence issue. However, unlike DAC that uses the target network, ValueDice is shown to successfully perform off-policy optimization \emph{without} the target network. We will provide explanations for these \dquote{contradictions} when we answer the raised question.

\textbf{Main Contributions.} First, we show that ValueDice could reduce to BC under the offline setting. Second, we verify that \emph{overfitting} exists and \emph{regularization} matters in the low-data regime. Specifically, we demonstrate that with weight decay, BC also nearly matches the expert performance as ValueDice does. Actually, ValueDice has implemented the orthogonal regularization \citep{brock2018large}.  The first two claims explain the superior offline performance of ValueDice. Third, we establish that ValueDice \emph{cannot} successfully optimize Bellman backup objectives when the expert trajectory is subsampled. As a result, ValueDice is inferior to existing methods (e.g., DAC and GAIL) in this case. Instead, the mentioned success of ValueDice holds when the expert trajectory is complete, in which ValueDice is closely related to BC that performs well as mentioned.

For general imitation learning research, our studies have two implications. For one thing, we additionally prove that $\ell_1$-norm based state-action distribution matching (a general AIL method) is also equivalent to BC under the offline setting under mild conditions. This suggests offline AIL is expected to be minimax optimal under the offline setting as BC does. For another thing, the success of BC (with regularization) discloses limitations of existing MuJoCo locomotion benchmarks: deterministic transitions and limited initial states. For such tasks, BC has no compounding errors and performs well even provided 1 expert trajectory, which is not revealed in the previous research.

\section{Background}
\label{sec:background}

\textbf{Markov Decision Process.} Following the setup in \citep{Kostrikov20value_dice}, we consider the infinite-horizon Markov decision process (MDP) \citep{puterman2014markov}, which can be described by the tuple $\gM = (\gS, \gA, p, r, p_0, \gamma)$. Here $\gS$ and $\gA$ are the state and action space, respectively. $p_0(\cdot)$ specifies the initial state distribution while $\gamma \in [0, 1)$ is the discount factor. For a specific state-action pair $(s, a)$, $p(s^\prime|s, a)$ defines the transition probability of the next state $s^\prime$ and $r(s, a) \in [0, 1]$ assigns a reward signal. 

To interact with the environment (i.e., an MDP), a stationary policy $\pi: \gS \rar \Delta(\gA)$ is introduced, where $\Delta(\gA)$ is the set of probability distributions over the action space $\gA$. Specifically, $\pi(a|s)$ determines the probability of selecting action $a$ on state $s$. The performance of policy $\pi$ is measured by \emph{policy value} $V(\pi)$, which is defined as the cumulative and discounted rewards. That is, 
\begin{align*}
    V(\pi) := \expect_{\pi} \ls \sum_{t=0}^{\infty} \gamma^{t} r(s_t, a_t) \mid s_0 \sim p_0(\cdot), a_t \sim \pi(\cdot|s_t), s_{t+1} \sim p(\cdot|s_t, a_t) \rs.
\end{align*}
To facilitate later analysis, we need to introduce the (discounted) state-action distribution $\dpi(s, a)$:
\begin{align*}
    \dpi(s, a) = (1-\gamma) \sum_{t=0}^{\infty} \gamma^{t}\sP \lp s_t= s, a_t =a \mid s_0 \sim p_0(\cdot), a_t \sim \pi(\cdot|s_t), s_{t+1} \sim p(\cdot|s_t, a_t) \rp.
\end{align*}
With a little of notation abuse, we can define the (discounted) state distribution $\dpi(s)$:
\begin{align*}
    \dpi(s) = (1-\gamma) \sum_{t=0}^{\infty} \gamma^{t}\sP \lp s_t= s \mid s_0 \sim p_0(\cdot), a_t \sim \pi(\cdot|s_t), s_{t+1} \sim p(\cdot|s_t, a_t) \rp.
\end{align*}

\textbf{Imitation Learning.} The goal of imitation learning is to learn a high-quality policy from expert demonstrations. To this end, we often assume there is a nearly optimal expert policy $\piE$ that could interact with the environment to generate a dataset $\gD = \{ (s_i, a_i) \}_{i=1}^{m}$. Then, the learner can use the dataset $\gD$ to mimic the expert and to obtain a good policy. The quality of imitation is measured by the \emph{policy value gap} $V(\piE) - V(\pi)$ \citep{pieter04apprentice, syed07game, ross2010efficient}. The target of imitation learning is to minimize the policy value gap between the expert policy $\piE$ and the learner's policy $\pi$:
\begin{align*}
    \min_{\pi} V({\piE}) - V(\pi).
\end{align*}
Notice that this policy optimization is performed without the reward signal. In this manuscript, we mainly focus on the case where the expert policy is deterministic, which is true in many applications and is widely used in the literature.

\textbf{Behavioral Cloning.} Given the state-action pairs provided by the expert, Behavioral Cloning (BC) \citep{Pomerleau91bc} directly learns a policy mapping to minimize the policy value gap. Specifically, BC often trains a parameterized classifier or regressor with the maximum likelihood estimation:
\begin{align}   \label{eq:bc_loss}
    \min_{\theta} \sum_{(s, a) \in \gD} -\log \pi_{\theta}(a|s),
\end{align}
which can be viewed as the minimization of the KL-divergence between the expert policy and the learner's policy \citep{ke19imitation_learning_as_f_divergence, ghasemipour2019divergence, xu2020error}: 
\begin{align*}
    \min_{\pi} \expect_{s \sim \dexp(\cdot)} \ls \KL (\piE(\cdot| s) || \pi(\cdot|s) ) \rs,
\end{align*}
where $\KL(p||q) = \sum_{x \in \gX} p(x) \log (p(x)/q(x))$ for two distributions $p$ and $q$ on the common support $\gX$. 
Under the tabular setting where the state space and action space are finite, the optimal solution to \eqref{eq:bc_loss} could take in a simple form:
\begin{align}   \label{eq:bc_count}
\pi^{\operatorname{BC}}(a|s) = \left\{ \begin{array}{cc}
 \frac{ \sum_{(s^\prime, a^\prime) \in \gD}  \1(s^\prime = s, a^\prime = a)}{\sum_{(s^\prime, a^\prime) \in \gD} \1(s^\prime = s)}     & \quad  \text{if} \,\, \sum_{(s^\prime, a^\prime) \in \gD} \1(s^\prime = s) > 0 \\
  \frac{1}{|\gA|}   &  \quad \text{otherwise} 
\end{array} \right.
\end{align}
That is, BC estimates the empirical action distribution of the expert policy on visited states from the expert demonstrations.

\textbf{ValueDice.} With the state-action distribution matching, ValueDice \citep{Kostrikov20value_dice} implements the objective:
\begin{align}    
   \max_{\pi} -\KL ( \dpi || \dexp ) &:= \expect_{(s, a) \sim d^{\pi}} \ls \log \frac{\dexp (s, a)}{\dpi (s, a)} \rs  \label{eq:state_action_kl_matching_1}  \\
    &= \expect\ls \sum_{t=0}^{\infty} \gamma^{t} \log \frac{\dexp (s_t, a_t)}{\dpi (s_t, a_t)} \mid  s_0 \sim p_0(\cdot), a_t \sim \pi(\cdot|s_t), s_{t+1} \sim p(\cdot|s_t, a_t) \rs . \label{eq:state_action_kl_matching_2}
\end{align}
That is, \eqref{eq:state_action_kl_matching_2} amounts a max-return RL problem with the reward $\widetilde{r}(s, a) = \log(\dexp (s, a))/ \log (\dpi(s, a))$. In ValueDice, a dual form of \eqref{eq:state_action_kl_matching_2} is presented:
\begin{align*}
   -\KL ( \dpi || \dexp ) = \min_{x: \gS \times \gA \rar \real} \ls \log \expect_{(s, a) \sim \dexp} [e^{x(s, a)}] - \expect_{(s, a) \sim \dpi}[x(s, a)] \rs.
\end{align*}
With the trick of variable change, \citet{Kostrikov20value_dice} derived the following optimization problem: 
\begin{align} \label{eq:value_dice_objective}
    \max_{\pi} \min_{\nu: \gS \times \gA \rar \real } J_{\dice} (\pi, \nu):= \log \lp \expect_{(s, a) \sim \dexp}[e^{\nu(s, a) - \gB^\pi \nu(s, a)}] \rp - (1-\gamma) \cdot \expect_{s_0 \sim p_0(\cdot), a_0 \sim \pi(\cdot|s_0)}[\nu(s_0, a_0)],
\end{align}
where $\gB^{\pi}\nu(s, a) = \gamma \expect_{s^\prime \sim p(\cdot|s, a), a^\prime \sim \pi(\cdot |s^\prime)}\ls \nu(s^\prime, a^\prime) \rs$ performs one-step Bellman update (with zero reward). As noted in \citep{Kostrikov20value_dice}, objective \eqref{eq:value_dice_objective} only involves the samples from the expert demonstrations to update, which may lack diversity and hamper training. To address this issue, \citet{Kostrikov20value_dice} proposed to use samples from the replay buffer under the online setting. Concretely, an alternative objective is introduced:
\begin{align*}
    J_{\dice}^{\mix} &:= \log \lp \expect_{(s, a) \sim \dmix}[e^{\nu(s, a) - \gB^\pi \nu(s, a)}] \rp - (1-\alpha) (1-\gamma) \cdot \expect_{s_0 \sim p_0(\cdot), a_0 \sim \pi(\cdot|s_0)}[\nu(s_0, a_0)] \\
    &\quad - \alpha \expect_{(s, a) \sim \drb}\ls \nu(s, a) - \gB^{\pi} \nu(s, a) \rs,
\end{align*}
where $\dmix(s, a) = (1-\alpha) \dexp(s, a) + \alpha \drb(s, a)$. In this case, VauleDice is to perform the modified state-action distribution matching: 
\begin{align*}
   \max_{\pi}\quad  -\KL ( (1-\alpha) \dpi + \alpha \drb || (1-\alpha) \dexp + \alpha\drb ).
\end{align*}
As long as $\alpha \in [0, 1)$, the global optimality of $\pi = \piE$ is reserved. Under the offline setting, we should set $\alpha = 0$.

\section{Rethinking ValueDice Under the Offline Setting}

In this section, we revisit ValueDice under the offline setting, where only the expert dataset is provided and environment interaction is not allowed. In this case, the simple algorithm BC can be directly applied while on-policy adversarial imitation learning (AIL) methods cannot. This is because the latter often requires environment interactions to evaluate the state-action distribution $\dpi(s, a)$ to perform optimization.

Under the offline setting, the classical theory suggests that BC suffers the compounding errors issue \citep{ross11dagger, rajaraman2020fundamental, xu2020error}. Specifically, once the agent makes a wrong decision (i.e., an imperfect imitation due to finite samples), it may visit an unseen state and make the wrong decision again. In the worst case, such decision errors accumulate over $H$ time steps and the agent could obtain zero reward along such a bad trajectory. This explanation often coincides with the empirical observation that BC performs poorly when expert demonstrations are scarce. Thus, it is definitely interesting and valuable if some algorithms can perform better than BC under the offline setting. Unfortunately, the information-theoretical lower bound \citep{rajaraman2020fundamental, xu2021error} suggests that BC is (nearly) minimax optimal under the offline setting. 

\begin{thm}[Lower Bound \citep{xu2021error}]
Given expert dataset $\gD = \{ (s_i, a_i) \}_{i=1}^m$ which is i.i.d. drawn from $\dexp$, for any offline algorithm $\mathrm{Alg}: \gD \rightarrow \widehat{\pi}$, there exists a constant $\delta_0 \in (0, 1/10]$, an MDP $\gM$ and a deterministic expert policy $\piE$ such that, for any $\delta \in (0, \delta_0)$, with probability at least $1-\delta$, the policy value gap has a lower bound:
\begin{align*}
    V({\piE}) - V({\widehat{\pi}}) \succsim \mathrm{min} \lp \frac{1}{1-\gamma}, \frac{\vert \gS \vert}{(1-\gamma)^2 m} \rp.
\end{align*}
\end{thm}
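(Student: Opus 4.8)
The plan is to prove this as a minimax lower bound in the standard way: construct a family of hard instances on which \emph{every} map $\mathrm{Alg}\colon\gD\mapsto\widehat\pi$ must suffer a large value gap, average over the family (Yao's principle / a Le~Cam style argument) to extract a single instance that defeats the given algorithm, and read the two terms of the $\min$ off two sample-size regimes. All instances would share a template: a deterministic expert whose action at each ``decision'' state $s$ is a hidden bit $\theta_s\in\{1,2\}$ (with a uniform prior on $\theta=(\theta_s)_s$); the expert action keeps the agent on a reward-$1$ trajectory, while the other action is penalized --- routed to a zero-reward absorbing state when we want one mistake to be catastrophic, or charged a local loss when we want mistakes to accumulate. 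This guarantees $V(\piE)=\tfrac1{1-\gamma}$ for every $\theta$, makes wrong actions cost discounted future return, and ensures the learner can read $\theta_s$ only for states $s$ appearing in $\gD$.

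The core is a coverage argument. Because $\gD$ is $m$ i.i.d.\ draws from $\dexp$, a coupon-collector estimate plus a Chernoff bound on the number of distinct sampled states shows that, with probability close to $1$ over $\gD$, the $\dexp$-mass of the \emph{un-sampled} decision states is at least some quantity $\mu(m)$; on each such state the posterior over $\theta_s$ is uniform, so the fixed algorithm errs there with probability $\tfrac12$ over $\theta$. Propagating these forced errors through the dynamics and summing the resulting (geometric, or per-state) series gives, up to universal constants, $V(\piE)-V(\widehat\pi)\asymp \min\!\big(\tfrac1{1-\gamma},\,\tfrac{\mu(m)}{(1-\gamma)^2}\big)$. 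It then remains to shape the expert's state-visitation profile over $\Theta(|\gS|)$ decision states so that $\mu(m)\asymp 1$ while $m\lesssim\tfrac{|\gS|}{1-\gamma}$ --- this gives the $\tfrac1{1-\gamma}$ branch, corresponding to ``data too scarce to get started'' --- and $\mu(m)\asymp\tfrac{|\gS|}{(1-\gamma)m}$ once $m\gtrsim\tfrac{|\gS|}{1-\gamma}$, which gives the $\tfrac{|\gS|}{(1-\gamma)^2 m}$ branch, corresponding to a constant fraction of the $\Theta(|\gS|)$ critical states each still receiving only $O(1)$ samples. Plugging both regimes into the displayed bound produces exactly $\min\!\big(\tfrac1{1-\gamma},\,\tfrac{|\gS|}{(1-\gamma)^2 m}\big)$.

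To finish, I would average the bound over the uniform prior on $\theta$: this fixes an MDP $\gM$ and deterministic expert $\piE$ on which $\mathbb{E}_{\gD}[V(\piE)-V(\widehat\pi)]$ meets the bound. Upgrading to ``with probability $\ge 1-\delta$'' uses that, conditionally on the high-probability coverage event, the gap is a sum of $\Theta(|\gS|)$ nearly independent per-critical-state contributions, so a Hoeffding/Bernstein bound gives concentration around the mean; choosing the absolute constant $\delta_0\le 1/10$ small enough to absorb the slack in the coverage event and the concentration then yields the stated bound $V(\piE)-V(\widehat\pi)\succsim\min\!\big(\tfrac1{1-\gamma},\,\tfrac{|\gS|}{(1-\gamma)^2 m}\big)$ for every $\delta\in(0,\delta_0)$.

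The step I expect to be the main obstacle is engineering $\mu(m)$: designing an instance (or a pair of instances, one per branch of the $\min$) whose coverage deficit stays $\Theta(1)$ all the way up to $m\asymp\tfrac{|\gS|}{1-\gamma}$ and then decays \emph{polynomially} like $\tfrac{|\gS|}{(1-\gamma)m}$ --- a naive uniform visitation profile would make it decay exponentially in $m/|\gS|$ and fall short in the data-rich regime --- while simultaneously keeping the expert deterministic, the rewards in $[0,1]$, the per-mistake loss large enough, and (for the high-probability claim) the per-state contributions genuinely weakly dependent so the concentration step goes through. The coupon-collector concentration and the averaging-to-a-single-instance step are by comparison routine.
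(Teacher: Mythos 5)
First, a point of reference: the paper itself does not prove this theorem --- it is imported from \citet{xu2021error} (see also \citet{rajaraman2020fundamental}) --- so the comparison is against the known ``Reset Cliff'' construction in those works. At the architectural level your plan matches it: a deterministic expert whose action at each decision state is a hidden bit, wrong actions routed to a zero-reward absorbing state, a missing-mass/coverage argument showing the learner must guess on unseen states, two sample-size regimes producing the two branches of the $\min$, and an averaging step over the uniform prior to fix a single hard instance, with a constant-probability (note the bound does not actually depend on $\delta$, so ``for all $\delta\in(0,\delta_0)$'' is really a $1-\delta_0$ statement) concentration argument on top.

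The genuine gap is in how you propose to ``shape'' $\mu(m)$. As stated, $\mu(m)$ is the $\dexp$-mass of the \emph{unsampled} states, and you want $\mu(m)\asymp \frac{|\gS|}{(1-\gamma)m}$ in the data-rich regime. That is unattainable: for any distribution supported on at most $|\gS|$ states, each atom contributes at most $\max_p p(1-p)^m\approx \frac{1}{em}$ to the expected missing mass, so $\expect[\mu(m)]\lesssim \frac{|\gS|}{m}$, and by Markov the event $\mu(m)\gtrsim\frac{|\gS|}{(1-\gamma)m}$ has probability $O(1-\gamma)$, not $1-\delta$. (If it were attainable, your own formula $\mu/(1-\gamma)^2$ would give $\frac{|\gS|}{(1-\gamma)^3 m}$, overshooting the target --- a sign the $\frac{1}{1-\gamma}$ factors are misallocated.) The correct source of the extra $\frac{1}{1-\gamma}$ in the cited construction is a \emph{mismatch between the sampling measure $\dexp$ and the measure under which the learner hits the unseen states}: the expert exits each decision state immediately (toward a common rewarding region), so a decision state with initial probability $p_0(s)$ carries discounted occupancy only $\dexp(s)\asymp(1-\gamma)p_0(s)$. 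Give each of $\Theta(|\gS|)$ decision states $\dexp$-mass $\asymp 1/m$ (feasible exactly when $m\gtrsim\frac{|\gS|}{1-\gamma}$, which is where the second branch of the $\min$ takes over); then a constant fraction of them are unseen, their total $p_0$-mass is $\asymp\frac{|\gS|}{(1-\gamma)m}$, the learner errs on each with probability $\tfrac12$ under the prior, and each error costs $\asymp\frac{1}{1-\gamma}$, yielding $\frac{|\gS|}{(1-\gamma)^2 m}$; the regime $m\lesssim\frac{|\gS|}{1-\gamma}$ gives the $\frac{1}{1-\gamma}$ branch. With that repair (and noting the unseen-state indicators are negatively associated, so your Chernoff/Hoeffding step goes through), the rest of your outline --- averaging over the prior to extract one MDP and expert, then choosing $\delta_0$ a small absolute constant --- is sound and is essentially the argument in the cited work.
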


\begin{rem}
We note that the sample complexity of BC is $\widetilde{\gO}({|\gS|}/{((1-\gamma)^2 \varepsilon)})$ \citep{xu2021error}, which matches the lower bound up to logarithmic terms. We emphasize that the minimax optimality of BC does not mean that BC is optimal for each task. Instead, it says that the worst case performance of BC is optimal. The worst case performance of offline imitation learning algorithms can be observed on the Reset Cliff MDPs in \citep{rajaraman2020fundamental, xu2021error}, which share many similarities with MuJoCo locomotion tasks\footnote{The similarity is explained as follows. Both tasks involve a bad absorbing state: once the agent takes a wrong action, it goes to this bad absorbing state and obtains a zero reward forever.}. To this end, we can believe that BC is optimal for MuJoCo locomotion tasks under the offline setting.
\end{rem}

\begin{figure}[htbp]
    \centering
    \includegraphics[width=\linewidth]{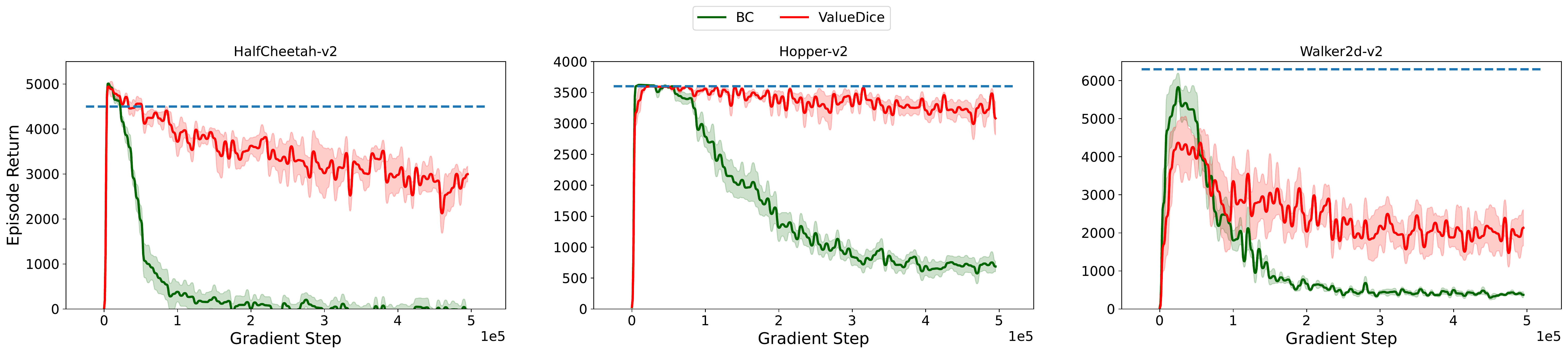}
    \caption{Comparison of ValueDice and BC under the offline setting with 1 expert trajectory, which reproduces \citep[Figure 4]{Kostrikov20value_dice}. Dashed lines correspond to the performance of expert policies.}
    \label{fig:value_dice_reproduce}
\end{figure}

From the empirical side, we realize that \citet{Kostrikov20value_dice} showed that their proposed algorithm \emph{ValueDice} could be better than BC on many interesting tasks under the offline setting; see \citep[Figure 4]{Kostrikov20value_dice} or the reproduced results\footnote{Our implementation of BC is different from the one in \citep{Kostrikov20value_dice}. In particular, our BC policy is deterministic and does not learn the covariance, while the BC policy in \citep{Kostrikov20value_dice} is stochastic. We note that although we may learn a stochastic policy, it is important for the evaluation performance to output a deterministic action (i.e., the mean) \citep{ho2016gail, haarnoja2018sac}. In fact, we observe that learning a stochastic policy by sharing the hidden layers really hurts the performance of BC.  Nevertheless, we would like to clarify that the implementation of BC in \citep{Kostrikov20value_dice} aims for a \dquote{fair} comparison because ValueDice uses a stochastic policy.} in the following \cref{fig:value_dice_reproduce} (all experiment details can be found in \cref{appendix:experiment_details}). This manifests a gap between theory and practice. In the sequel, we examine the achievement of ValueDice and explain this gap.

\subsection{Connection Between ValueDice and BC Under Offline Setting}

As a first step, let us review the training objective of ValueDice in the offline scenario. For ease of presentation, let us consider the case where the number of expert trajectory is 1, which follows the experimental setting used in \cref{fig:value_dice_reproduce}. As a result, the training objective of ValueDice with $T$ samples is formulated as 
\begin{align}  \label{eq:value_dice_empirical_surrogate_objective}
   J_{\dice}^{\offline}(\pi, \nu) &:= \log \lp \sum_{t=1}^{T} \ls e^{\nu(s_t, a_t) - \gamma \expect_{\widetilde{a}_{t+1} \sim \pi(\cdot|s_{t+1})}[ \nu(s_{t+1}, \widetilde{a}_{t+1})]} \rs \rp  - (1-\gamma) \sum_{t=1}^{T} \expect_{\widetilde{a}_t \sim \pi (\cdot|s_t) }[\nu(s_t, \widetilde{a}_t)], 
\end{align}

\begin{rem}   \label{remark:value_dice_opt}
Even derived from the principle of state-action distribution matching (i.e., objective \eqref{eq:state_action_kl_matching_1}), under the offline setting, we argue that ValueDice cannot enjoy the benefit of state-action distribution matching\footnote{The benefit is explained as follows. Under the online (or the known transition) setting, state-action distribution matching methods can perform policy optimization on non-visited states. With additional transition information, state-action distribution matching methods could have better sample complexity than BC under mild conditions \citep{xu2021nearly}.}. This is because objective \eqref{eq:value_dice_empirical_surrogate_objective} is performed only on states collected by the expert policy. That is, there is no guidance for policy optimization on non-visited states. This is the main difference between the online setting and offline setting. As a result, we cannot explain the experiment results in \cref{fig:value_dice_reproduce} by the tailored theory for online state-action distribution matching in \citep{xu2021nearly}. 
\end{rem}

To obtain some intuition about the optimal solutions to the empirical objective \eqref{eq:value_dice_empirical_surrogate_objective}, we can further consider the case where $\gamma = 0$. Note that even this variant could also outperform BC under the offline setting (see \cref{fig:value_dice_offline_variant}).  

\begin{figure}[htbp]
    \centering
    \includegraphics[width=\linewidth]{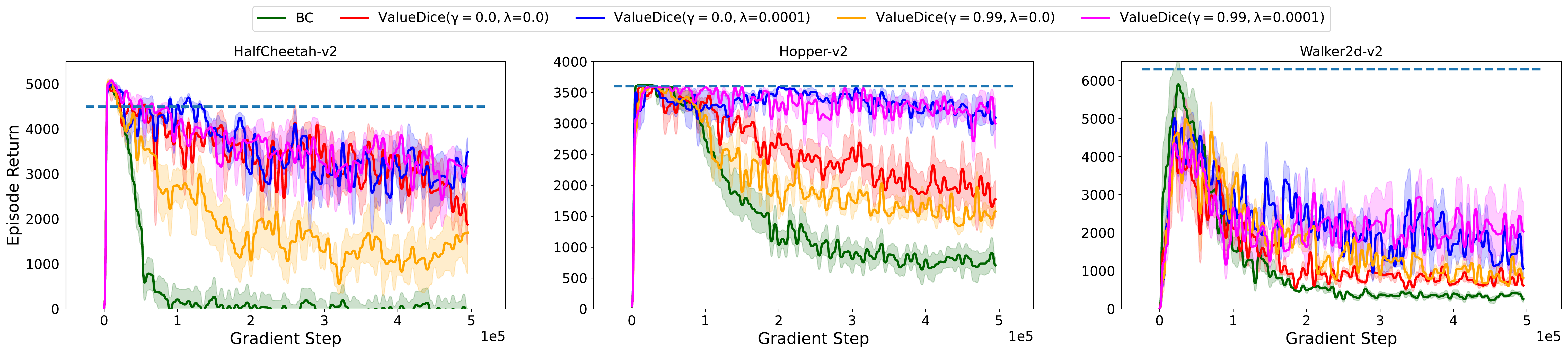}
    \caption{Comparison of BC and ValueDice's variants (with different discount factors $\gamma$ and orthogonal regularization coefficients $\lambda$) under the offline setting with 1 expert trajectory. The default configuration for ValueDice is $\gamma=0.99$ and $\lambda = 0.0001$. Dashed lines correspond to the performance of expert policies.}
    \label{fig:value_dice_offline_variant}
\end{figure}

\begin{thm}  \label{theorem:reduction_to_bc_gamma_0}
Consider the tabular MDPs. If $\gamma = 0$,  objective \eqref{eq:value_dice_empirical_surrogate_objective} reduces to 
\begin{align}   \label{eq:value_dice_empirical_surrogate_objective_gamma_0}
J_{\dice}^{\offline, \gamma=0}(\pi, \nu) &:= \log \lp \sum_{t=1}^{T} e^{\nu(s_t, a_t)} \rp -  \sum_{t=1}^{T} \expect_{\widetilde{a}_t \sim \pi (\cdot|s_t) }[ \nu(s_t, \widetilde{a}_t)].
\end{align}
Importantly, the BC's solution as in \eqref{eq:bc_count} is also one of the globally optimal solutions for ValueDice's objective in \eqref{eq:value_dice_empirical_surrogate_objective_gamma_0}.
\end{thm}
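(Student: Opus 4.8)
\emph{Structure.} The statement has two halves, which I would dispatch in order. The reduction to \eqref{eq:value_dice_empirical_surrogate_objective_gamma_0} is a one-line substitution: in \eqref{eq:value_dice_empirical_surrogate_objective} the next-state value appears in the exponent only multiplied by $\gamma$, and the trailing sum carries the prefactor $(1-\gamma)$, so putting $\gamma=0$ deletes the term $\gamma\,\expect_{\widetilde a_{t+1}\sim\pi(\cdot|s_{t+1})}[\nu(s_{t+1},\widetilde a_{t+1})]$ and replaces $(1-\gamma)$ by $1$, which is precisely \eqref{eq:value_dice_empirical_surrogate_objective_gamma_0}. For the optimality claim I would prove slightly more than asked: I would exhibit an explicit saddle point of the game $\max_\pi\min_\nu J_{\dice}^{\offline,\gamma=0}(\pi,\nu)$ whose policy coordinate is $\pi^{\operatorname{BC}}$, since for any saddle point $(\pi^\star,\nu^\star)$ the policy $\pi^\star$ automatically attains the outer $\max$--$\min$.

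\emph{Exhibiting the saddle point.} First I would regroup the $T$ time steps by the distinct states they visit. Because $\piE$ is deterministic, every visited state $s$ carries a single expert action $a_s:=\piE(s)$; writing $n_s$ for the number of steps spent at $s$ and $\widehat n_s:=n_s/T$ for the empirical state frequency, and reading the empirical sums in \eqref{eq:value_dice_empirical_surrogate_objective_gamma_0} as the trajectory averages they represent, the objective becomes
\begin{align*}
J_{\dice}^{\offline,\gamma=0}(\pi,\nu)=\log\Big(\textstyle\sum_{s}\widehat n_s\,e^{\nu(s,a_s)}\Big)-\sum_{s}\widehat n_s\sum_{a}\pi(a|s)\,\nu(s,a),
\end{align*}
both sums running over visited states only; the values of $\nu$ at unvisited states never enter, consistently with \cref{remark:value_dice_opt}. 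Then I would plug in the two candidate components. With $\nu\equiv 0$ the first term is $\log(\sum_s\widehat n_s)=\log 1=0$ and the second term is $0$, so $J_{\dice}^{\offline,\gamma=0}(\pi,0)=0$ for \emph{every} policy $\pi$, whence $\max_\pi J_{\dice}^{\offline,\gamma=0}(\pi,0)=0$. With $\pi=\pi^{\operatorname{BC}}$, formula \eqref{eq:bc_count} for a deterministic expert makes $\pi^{\operatorname{BC}}(\cdot|s)$ the point mass at $a_s$ on each visited $s$, so the second term collapses to $\sum_s\widehat n_s\,\nu(s,a_s)$ and
\begin{align*}
J_{\dice}^{\offline,\gamma=0}(\pi^{\operatorname{BC}},\nu)=\log\Big(\textstyle\sum_{s}\widehat n_s\,e^{\nu(s,a_s)}\Big)-\sum_{s}\widehat n_s\,\nu(s,a_s)\ \ge\ 0
\end{align*}
for all $\nu$, by Jensen's inequality for the convex map $x\mapsto e^{x}$ with the probability weights $\{\widehat n_s\}$, with equality when $\nu(\cdot,a_{\cdot})$ is constant over visited states; hence $\min_\nu J_{\dice}^{\offline,\gamma=0}(\pi^{\operatorname{BC}},\nu)=0$. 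Combining the two displays, $J_{\dice}^{\offline,\gamma=0}(\pi,0)=0=J_{\dice}^{\offline,\gamma=0}(\pi^{\operatorname{BC}},0)\le J_{\dice}^{\offline,\gamma=0}(\pi^{\operatorname{BC}},\nu)$ for all $\pi$ and $\nu$, i.e.\ $(\pi^{\operatorname{BC}},\mathbf 0)$ is a saddle point with value $0$, and therefore $\pi^{\operatorname{BC}}$ solves $\max_\pi\min_\nu J_{\dice}^{\offline,\gamma=0}$. To justify the wording ``one of'', I would add that if $\pi$ places positive mass on some action $a\neq a_s$ at a visited $s$ then sending $\nu(s,a)\to+\infty$ leaves the first term unchanged and pushes the objective to $-\infty$; so the maximiser set is exactly the policies that agree with $\piE$ on all visited states (their behaviour off those states being free), of which $\pi^{\operatorname{BC}}$ is one member.

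\emph{The crux.} The computation is short, and the only load-bearing inequality is the single application of Jensen; the place that needs care is the well-posedness of the inner minimisation. If the two empirical averages are not genuinely normalised (summing to one over the trajectory), then $\min_\nu J_{\dice}^{\offline,\gamma=0}(\pi,\cdot)=-\infty$ for \emph{every} $\pi$ (take $\nu$ a large constant, $T\ge 2$), so the claim acquires content only with the normalisation above. Under it, $\min_\nu J_{\dice}^{\offline,\gamma=0}(\pi,\cdot)$ is just minus the empirical KL between the two $\gamma=0$ occupancy measures, which with $\dexp(s,a)=p_0(s)\piE(a|s)$ and $\dpi(s,a)=p_0(s)\pi(a|s)$ reduces to $-\widehat{\expect}_{s}\big[\KL(\pi(\cdot|s)\,\|\,\piE(\cdot|s))\big]\le 0$, vanishing precisely on policies matching $\piE$ on the visited support — an alternative derivation of the same saddle point and a useful cross-check. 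A secondary but necessary point is the $\nu\to\infty$ argument, which is what upgrades ``$\pi^{\operatorname{BC}}$ is a maximiser'' to the sharp description of the maximiser set, and which also explains (by non-uniqueness) why the ValueDice implementation can converge to a policy other than $\pi^{\operatorname{BC}}$ even though the $\gamma=0$ objective does not prefer one.
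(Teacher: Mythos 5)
Your proof is correct, and it takes a more rigorous route than the paper's own argument, which is local and informal: the paper decomposes the objective over the visited state-action pairs, writes the contribution of a fixed pair $(s_t,a_t)$ as $\nu(s_t,a_t)+b-\expect_{\widetilde a_t\sim\pi(\cdot|s_t)}[\nu(s_t,\widetilde a_t)]$ (absorbing the rest of the log-sum-exp into a constant $b$, i.e.\ effectively linearizing the $\log$), and simply observes that $\pi(a_t|s_t)=1$ gives an equilibrium point, whence the counting solution \eqref{eq:bc_count} is among the optima. You instead keep the (normalized) log-sum-exp intact, exhibit the explicit saddle point $(\pi^{\operatorname{BC}},\nu\equiv 0)$, and certify it with a single application of Jensen's inequality, which upgrades the paper's equilibrium heuristic to a verified max--min optimality statement; on top of that you supply two things the paper does not: the exact maximizer set (all policies agreeing with $\piE$ on visited states, via the $\nu(s,a)\to+\infty$ argument for non-expert actions), and the observation that the objective \eqref{eq:value_dice_empirical_surrogate_objective_gamma_0} as literally written with unnormalized sums has inner value $-\infty$ for every policy once $T\ge 2$, so the claim is only meaningful under the empirical-average reading --- a normalization caveat that the paper's \dquote{$+b$} bookkeeping silently glosses over. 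The trade-off is brevity versus completeness: the paper's per-pair decomposition is shorter and matches the tabular \dquote{each element is independent} intuition, while your global saddle-point argument is self-contained, handles the $\log$ correctly, and explains why optimality of $\pi^{\operatorname{BC}}$ is non-unique.
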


\begin{proof}
For tabular MDPs, each element is independent. As a consequence, for a specific state-action pair $(s_t, a_t)$ appeared in the expert demonstrations, its optimization objective is 
\begin{align*}
    \max_{\pi(\cdot|s_t) \in \Delta(\gA)} \quad \min_{\nu(s_t, a_t) \in \real} \nu(s_t, a_t) + b - \expect_{ \widetilde{a}_t \sim \pi (\cdot|s_t)}[\nu(s_t, \widetilde{a}_t)],
\end{align*}
which has equilibrium points when $\pi(a_t|s_t) = 1$ and $\pi(a |s_t) = 0$ for any $a \ne a_t$. To this end, the \dquote{counting} solution in \eqref{eq:bc_count} for BC is also one of the globally optimal solutions for the empirical objective \eqref{eq:value_dice_empirical_surrogate_objective_gamma_0}.
\end{proof}

\begin{figure}[htbp]
    \centering
    \includegraphics[width=\linewidth]{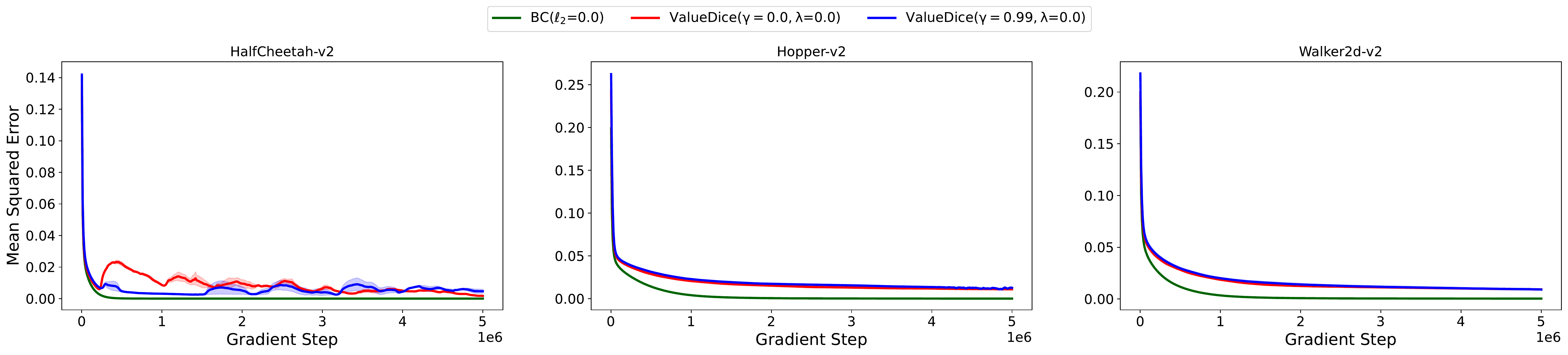}
    \caption{Comparison of BC and ValueDice in terms of MSE (i.e., BC's loss) under the offline setting with 1 expert trajectory. This result suggests that ValueDice is closely related to BC under the offline setting.}
    \label{fig:bc_loss}
\end{figure}

We remark that \cref{theorem:reduction_to_bc_gamma_0} provides a restricted case where ValueDice could degenerate to BC under the offline setting; see the experimental verification in \cref{fig:bc_loss}. As we have analyzed in \cref{remark:value_dice_opt}, the same intuition is expected to hold when $\gamma > 0$ as ValueDice does not have guidance on non-visited states. However, it is technically challenging to draw this conclusion. Instead, we consider another state-action distribution matching method with the $\ell_1$-norm metric. In particular, we claim that this conclusion still holds for tabular MDPs with finite horizons.

\begin{thm}[Informal version of \cref{theorem:episodic_mdp} in Appendix]    \label{theorem:episodic_mdp_reduction}
Consider the tabular MDPs with finite horizons. Assume that there is 1 expert trajectory provided. Under the offline setting, we have that BC's solution is the unique globally optimal solution to $\ell_1$-norm based state-action distribution matching (i.e., $\min_{\pi} \Vert \dpi - \widehat{\dexp} \Vert_1$), where $\widehat{\dexp}$ is the estimated expert state-action distribution from expert demonstrations. 
\end{thm}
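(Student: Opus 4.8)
The plan is to collapse the $\ell_1$-matching objective to a transparent per-time-step quantity and then read off that Behavioral Cloning is the unique minimizer. I would work throughout with the layered (time-augmented) form of the finite-horizon MDP, so that occupancy measures are indexed by the time step $h \in \{0,\dots,H-1\}$ and decompose across layers; this is without loss of generality for finite horizons and conveniently removes a nuisance when an expert state is revisited at a different time. Since the expert is deterministic and only a single trajectory $\tau = (\sigma_0,\alpha_0,\dots,\sigma_{H-1},\alpha_{H-1})$ is observed, the empirical occupancy $\widehat{\dexp}$ puts mass $w_h$ (the layer weight; $w_h=1/H$ in the undiscounted normalization) on the single layer-$h$ pair $((\sigma_h,h),\alpha_h)$ and zero on every other layer-$h$ pair. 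The first step is the key identity: for every policy $\pi$,
\begin{align*}
\norm{\dpi - \widehat{\dexp}}_1 \;=\; 2\sum_{h=0}^{H-1} w_h\bigl(1 - p_h(\pi)\bigr), \qquad p_h(\pi):=\Pr^{\pi}\!\bigl(s_h = \sigma_h,\ a_h = \alpha_h\bigr).
\end{align*}

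To prove the identity I would split each layer's $\ell_1$ contribution into the mass $\dpi$ assigns to states $\neq \sigma_h$, which is $w_h\bigl(1-\Pr^\pi(s_h=\sigma_h)\bigr)$, plus the within-$\sigma_h$ discrepancy $\sum_a \bigl|\Pr^\pi(s_h=\sigma_h)\,\pi(a\mid \sigma_h,h) - \1\{a=\alpha_h\}\bigr|$, which (using $\Pr^\pi(s_h=\sigma_h)\,\pi(\alpha_h\mid\sigma_h,h)\le 1$) equals $w_h\bigl(1+\Pr^\pi(s_h=\sigma_h)-2p_h(\pi)\bigr)$; the two add to $2w_h(1-p_h(\pi))$. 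Hence minimizing $\norm{\dpi-\widehat{\dexp}}_1$ is exactly maximizing the expected (weighted) number of steps at which the learner's realized trajectory coincides with the expert's, i.e.\ $\sum_h w_h p_h(\pi)$. Now I would invoke the hypotheses of the formal theorem — deterministic transitions (precisely the regime the paper highlights for MuJoCo) and a fixed initial state $\sigma_0$: the BC policy $\pi^{\operatorname{BC}}$ of \eqref{eq:bc_count}, which plays $\alpha_h$ with probability one at each visited $\sigma_h$, then reproduces $\tau$ deterministically, so $p_h(\pi^{\operatorname{BC}})=1$ for all $h$ and $\norm{d^{\pi^{\operatorname{BC}}}-\widehat{\dexp}}_1 = 0 = \min_\pi \norm{\dpi - \widehat{\dexp}}_1$; in fact $d^{\pi^{\operatorname{BC}}} = \widehat{\dexp}$. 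For uniqueness, any optimal $\pi$ must satisfy $p_h(\pi)=1$ for every $h$ (all terms in the identity are nonnegative); a forward induction on $h$ then forces $\pi(\alpha_0\mid\sigma_0)=1$ because $s_0=\sigma_0$ almost surely, hence $s_1=\sigma_1$ almost surely, hence $\pi(\alpha_1\mid\sigma_1)=1$, and so on — so $\pi$ agrees with $\pi^{\operatorname{BC}}$ on every expert-visited state, which are the only states reachable under such a $\pi$, establishing the asserted uniqueness (behaviour on unvisited states is immaterial, being unreachable).

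The step I expect to carry the weight is pinning down the identity in exactly the right form and recognizing that determinism of the transitions is essential rather than decorative. With stochastic transitions the identity itself still holds, but the induced objective $\max_\pi \sum_h w_h p_h(\pi)$ is a genuine time-inhomogeneous RL problem whose optimum need not be $\pi^{\operatorname{BC}}$: a policy can occasionally reach a later expert state $\sigma_h$ with higher probability through an off-trajectory detour, and $\pi^{\operatorname{BC}}$'s unconstrained (uniform) choices on states that are reachable-but-unvisited can then be strictly improved — one can build a small three-step example where a non-BC policy attains a strictly smaller $\ell_1$ gap — so the proof genuinely requires the deterministic-transition assumption. The residual work is just the layer-wise bookkeeping in the $\ell_1$ computation, the $\gamma$-discounted-weight version of the identity if one prefers discounted occupancies, and phrasing ``unique'' relative to unreachable states; all of this is routine once the identity above is in hand.
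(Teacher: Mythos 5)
Your key identity is exactly the computation the paper performs inside its proof: the paper's base case and its expression $\text{Loss}_h = 2\lp 1 - d^{\pi}_h(\widetilde{s})\,\pi_h(a^{\expert}|\widetilde{s})\rp$ are the per-layer form of your $\norm{\dpi-\widehat{\dexp}}_1 = 2\sum_h w_h(1-p_h(\pi))$. Where you diverge from the paper is in what is assumed afterwards, and this is where your proposal does not prove the theorem as the paper states it. The formal statement (\cref{theorem:episodic_mdp}) does \emph{not} assume deterministic transitions or a fixed initial state; it is stated for general tabular episodic MDPs, with the extra constraint that $\pi_t(\cdot|s)$ is uniform on every state the expert did not visit at time $t$, and its proof is a backward induction in which the coefficient $c^{\expert}$ of $\pi_h(a^{\expert}|\widetilde{s})$ is compared with the coefficient $c^{a}$ of any other action. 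You instead import determinism and a visited initial state, after which optimality is immediate (BC drives the objective to zero, indeed $d^{\pi^{\operatorname{BC}}}=\widehat{\dexp}$) and uniqueness on expert-visited states follows by forward induction. That argument is correct, but it is a strictly weaker statement than the one you were asked to prove, and attributing those hypotheses to the formal theorem is a misreading.

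That said, your closing remark is not a throwaway: the stochastic-transition concern you raise is real and cuts against the paper's more general claim rather than against you. In your ``lucky expert'' scenario (the expert action reaches the next expert state only with probability $1/2$ while another action reaches it with probability $1$, transitions deterministic afterwards, horizon at least $4$), the $\ell_1$ objective evaluates to $3$ for BC and $2$ for the deviating policy, even though the uniform-on-unvisited-states constraint is satisfied, so BC is not optimal there. The paper's induction compares $c^{\expert}$ and $c^{a}$ with the future contribution written for a single later step $h'$; the immediate ``$+2$'' advantage of the expert action can be outweighed once the contributions of \emph{all} later steps are summed, which is precisely the mechanism you describe. So the honest positioning of your proof is: a clean and correct argument for the deterministic, fixed-initial-state regime (the one relevant to the paper's MuJoCo discussion), together with an explicit counterexample showing some such restriction is needed — not a proof of the theorem at the stated level of generality, which is where the paper's own argument is the one under strain.
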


See \cref{appendix:reduction_from_ail_to_bc} for the formal argument and proof. Note that the assumption of 1 expert trajectory is mainly to make the analysis simple and $\ell_1$-norm based state-action distribution matching is to ensure that the objective is well-defined\footnote{In contrast, the KL-divergence used in ValueDice is not well-defined for some state-action pair $(s, a)$ with $\dpi(s, a) = 0$ but $\dexp(s, a) > 0$.}. Actually, the experiment results in \cref{fig:value_dice_reproduce} satisfy the assumptions required in \cref{theorem:episodic_mdp_reduction}. We remark that \cref{theorem:episodic_mdp_reduction} establishes a one-to-one connection between BC and ValueDice under the \emph{offline} setting, which adds a constraint on the optimization procedure (i.e., optimization is defined over visited states). We conjecture that this conclusion holds for other AIL methods. Finally, we note that the one-to-one relationship is not disclosed by \cref{theorem:reduction_to_bc_gamma_0}.

In summary, we could believe that under the offline setting, AIL methods could not perform optimization on non-visited states. As a result, the BC policy is sufficient to be the globally optimal solution of AIL methods\footnote{Recently, {\citet{swamy2021moments} showed an interesting conclusion that BC and ValueDice can be viewed as \dquote{off-Q} algorithms. Based on this conclusion, \citet{swamy2021moments} argued that BC and ValueDICE should have similar policy value gaps. We note that this conclusion mainly holds in the \emph{population} level (informally speaking, \dquote{population} means that there are infinite samples). In contrast, we focus on the \emph{finite-samples} setting and and validate that offline AIL methods can directly reduce to BC. One important message that we want to convey is that in the offline setting with finite samples, the policy optimization only involves visited states so the fundamental limit (i.e., the compounding errors issue) cannot be overcome.}}. We highlight that this insight is not illustrated by the lower bound argument \citep{rajaraman2020fundamental, xu2021error}, which only tells us that one specific algorithm, BC, is nearly minimax optimal under the offline setting. From another viewpoint, our result also suggests that state-action distribution matching is expected to be a \dquote{universal} principle: it is closely related to BC under the offline setting while it enjoys many benefits under the online setting.

\subsection{Overfitting of ValueDice and BC}

Based on the above conclusion that AIL may reduce to BC under the offline setting, it is still unsettled that why ValueDice outperforms BC in \cref{fig:value_dice_reproduce}. If we look at training curves carefully, we would observe that BC could perform well in the initial learning phase. As the number of iterations increases, the performance of BC tends to degenerate. This is indeed the \emph{overfitting} phenomenon. 

To overcome the overfitting in this low-data regime, we can adopt many regularization techniques such as weight decay \citep{krogh1991weightdecay}, dropout \citep{srivastava2014dropout}, and early stopping \citep{prechelt1996earlystopping}. For simplicity, let us consider the weight decay technique, i.e., adding a squared $\ell_2$-norm penalty for the training parameters to the original training objective. The empirical result is shown in \cref{fig:bc_l2}. Surprisingly, we can find that even with small weight decay, BC improves its generalization performance and is on-pair with  ValueDice.

\begin{figure}[htbp]
    \centering
    \includegraphics[width=\linewidth]{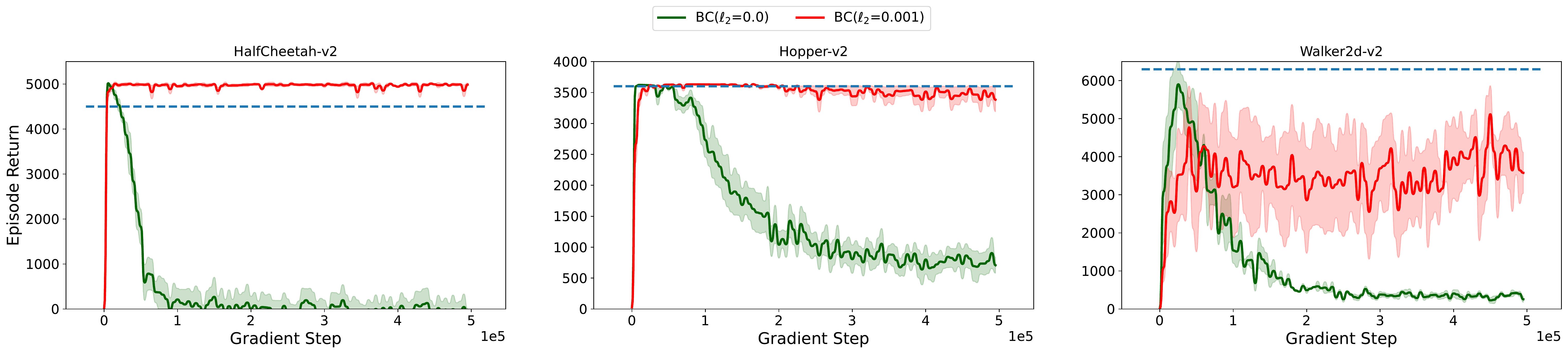}
    \caption{Comparison of BC with/without weight decay (i.e., $\ell_2$-norm based regularization) with 1 expert trajectory. Dashed lines correspond to the performance of expert policies.}
    \label{fig:bc_l2}
\end{figure}

The perspective of overfitting motivates us to carefully examine the performance of ValueDice. In particular, we realize that in practice ValueDice uses the orthogonal regularization \citep{brock2018large}, a kind of regularizer from the GAN literature. Without this regularizer, the performance of ValueDice is poor as shown in \cref{fig:value_dice_offline_variant}.

\begin{obs}
Regularization is important for offline imitation learning algorithms such as BC and ValueDice in the low-data regime.
\end{obs}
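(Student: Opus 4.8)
The plan is to establish this observation \emph{empirically} through a sequence of controlled ablations in the low-data regime (1 expert trajectory), since the statement concerns generalization behavior rather than a closed-form optimum. The logical backbone has three parts: first identify the failure mode, then show that an explicit regularizer fixes it for BC, and finally confirm that the same mechanism is already doing the work inside ValueDice.

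First I would \emph{diagnose overfitting} for unregularized BC. By \cref{theorem:reduction_to_bc_gamma_0} and \cref{theorem:episodic_mdp_reduction}, the offline optimum of ValueDice coincides with the BC counting solution \eqref{eq:bc_count} on visited states, so any performance difference between the two must come from \emph{how} the finite-sample objective is optimized rather than from the target it optimizes. I would therefore track policy value as a function of training iterations and exhibit the signature gap between training loss and evaluation return: BC attains near-expert return early and then degrades as the network continues to fit the handful of expert pairs. This rules out an ``objective-level'' explanation and localizes the issue to finite-sample generalization.

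Second, I would \emph{intervene with a single explicit regularizer} while holding everything else fixed. Adding a squared $\ell_2$ penalty (weight decay) to \eqref{eq:bc_loss} and sweeping its coefficient, I would show that even a small penalty removes the late-training collapse and lifts BC to match ValueDice (\cref{fig:bc_l2}). The controlled sweep is what upgrades a correlation into a causal claim: the only variable changed is the regularization strength, so the recovered performance is attributable to regularization alone.

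Third, I would \emph{ablate the regularizer inside ValueDice}. Since ValueDice as implemented already carries orthogonal regularization \citep{brock2018large}, I would disable it and show that its offline performance becomes poor (\cref{fig:value_dice_offline_variant}). The hardest part is \emph{disentangling} the effect of regularization from the other moving pieces of ValueDice---the discount factor $\gamma$, the orthogonal-regularization coefficient $\lambda$, and optimization stochasticity---so that the drop can be attributed to the regularizer itself rather than to an incidental change; the factorial sweep over $(\gamma,\lambda)$ in \cref{fig:value_dice_offline_variant} is designed precisely to isolate this, by showing that poor performance tracks the \emph{absence of regularization} across discount-factor settings. Combining the three parts yields the observation: both BC and ValueDice rely on regularization to generalize from a single trajectory, and it is this shared ingredient---not a more advanced objective---that underlies their strong offline performance.
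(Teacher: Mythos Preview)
Your proposal is correct and mirrors the paper's own argument: diagnose overfitting in unregularized BC from the training curves, show that weight decay restores BC to near-expert performance (\cref{fig:bc_l2}), and ablate ValueDice's orthogonal regularization to show it is load-bearing (\cref{fig:value_dice_offline_variant}). The only difference is that you frame the $(\gamma,\lambda)$ sweep more explicitly as a factorial control for disentangling regularization from discounting, whereas the paper leaves that inference implicit in the figure---but the empirical content and three-step structure are the same.
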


Up to now, we know that ValueDice degenerates to BC under the offline setting and the experiment results suggest that regularization matters. With proper regularization, BC and ValueDice are able to match the expert performance.

\subsection{Discussion: Why BC Performs Well?}
\label{subsec:discussion}

One related question about previous experiment results is that why BC (with regularization) performs so well (i.e., BC nearly matches the expert performance) even provided with 1 expert trajectory as in \cref{fig:bc_l2}. It seems that there are no compounding errors for BC.  Furthermore, the presented results seem to contradict the common impression that BC performs poorly but AIL performs well \citep{ho2016gail, Kostrikov19dac}. We discuss this mismatch in the following part. 

First, the compounding errors argument for BC relies on the assumption that the agent makes a mistake in each time step as in \citep{ross11dagger}. As such, decision errors accumulate over time steps. However, the assumption may not hold for some cases. Considering MDPs with deterministic transitions, the BC's policy does not make any mistake by starting from a visited initial state. As a consequence, there are no compounding errors along such a \dquote{good} trajectory.  Formally, the policy value gap of BC is expected to be $\widetilde{\gO}(|\gS| H/m)$ under MDPs with deterministic transitions, where $H$ is the finite horizon and $m$ is the number of expert trajectories; see \citep{xu2021nearly}.

\begin{thm}[BC for Deterministic MDPs \citep{xu2021nearly}]  \label{theorem:bc_deterministic}
Consider the tabular MDPs with finite horizons and deterministic transitions. Given $m$ expert trajectories with length $H$, the policy value gap of BC is upper bounded by:
\begin{align*}
    V(\pi^{\expert}) - \expect \ls V(\pi^{\operatorname{BC}}) \rs \precsim \lp H, \frac{|\gS| H}{m}\rp,
\end{align*}
where the expectation is taken over the randomness when collecting the expert dataset and $\pi^{\operatorname{BC}}$ is the \dquote{counting} solution similar to that in \eqref{eq:bc_count}.  
\end{thm}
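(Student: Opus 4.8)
The plan is to exploit the two structural hypotheses --- deterministic transitions and a deterministic expert --- to show that the counting policy $\pi^{\operatorname{BC}}$ reproduces an expert trajectory \emph{exactly} whenever it is launched from an initial state that was recorded in the dataset. Consequently the entire value gap is charged to the event that a freshly drawn initial state was unseen, and that event is a classical missing-mass quantity which an elementary calculus inequality controls by $\min(1,|\gS|/(em))$. So the proof is in two steps: a deterministic ``perfect tracking'' argument, then a missing-mass estimate.

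\emph{Step 1: from a recorded initial state, BC coincides with the expert.} Write $\gD$ as the union of the $m$ recorded trajectories, the $i$-th being $(s_0^{(i)},a_0^{(i)},\dots,s_{H-1}^{(i)},a_{H-1}^{(i)})$ with $s_0^{(i)}\sim p_0$. Because the expert is deterministic, every occurrence of a state $s$ in $\gD$ is paired with the \emph{same} action $\piE(s)$, so the counting solution in the spirit of \eqref{eq:bc_count} satisfies $\pi^{\operatorname{BC}}\big(\piE(s)\mid s\big)=1$ for every state $s$ appearing in $\gD$. Fix $i$ and roll out $\pi^{\operatorname{BC}}$ from $s_0^{(i)}$; by induction on $t$, if this rollout has matched the recorded trajectory through time $t-1$ then, by the deterministic transition, it is at $s_t^{(i)}$, which is recorded, hence $\pi^{\operatorname{BC}}$ selects $a_t^{(i)}=\piE(s_t^{(i)})$. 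Thus the $\pi^{\operatorname{BC}}$-rollout from $s_0^{(i)}$ is the expert rollout and collects the same return. Writing $V(\pi)$ as an average over the draw $s_0\sim p_0$, the contribution of any $s_0\in\{s_0^{(1)},\dots,s_0^{(m)}\}$ to $V(\piE)-V(\pi^{\operatorname{BC}})$ is $0$, while any other $s_0$ contributes at most $H$ (rewards in $[0,1]$, horizon $H$). Hence
\begin{align*}
    V(\piE)-V(\pi^{\operatorname{BC}}) \;\le\; H\cdot \prob_{s_0\sim p_0}\big(s_0\notin\{s_0^{(1)},\dots,s_0^{(m)}\}\big).
\end{align*}

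\emph{Step 2: the missing-mass bound.} Since $s_0^{(1)},\dots,s_0^{(m)}$ are i.i.d.\ draws from $p_0$, taking the expectation over the dataset gives
\begin{align*}
    \expect\Big[\prob_{s_0\sim p_0}\big(s_0\notin\{s_0^{(i)}\}_{i=1}^m\big)\Big] \;=\; \sum_{s\in\gS} p_0(s)\,\big(1-p_0(s)\big)^m \;\le\; \min\!\Big(1,\ \frac{|\gS|}{em}\Big),
\end{align*}
where the bound uses $\sum_s p_0(s)(1-p_0(s))^m\le 1$ trivially, and $p_0(s)(1-p_0(s))^m\le \max_{x\in[0,1]} x(1-x)^m\le 1/(em)$ termwise over the at most $|\gS|$ states. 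Combining with Step 1,
\begin{align*}
    V(\piE)-\expect\big[V(\pi^{\operatorname{BC}})\big] \;\le\; H\cdot\min\!\Big(1,\ \frac{|\gS|}{em}\Big) \;\precsim\; \min\!\Big(H,\ \frac{|\gS|H}{m}\Big).
\end{align*}

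\emph{Main obstacle.} The missing-mass estimate in Step 2 is textbook; essentially all the care is in Step 1. Two points need attention. First, if the finite-horizon expert is allowed to be time-inhomogeneous, the counting estimator should be indexed by the time step, but the induction is unchanged because the action recorded at $(s_t^{(i)},t)$ is still unique. Second, the uniform fallback branch of \eqref{eq:bc_count} on unvisited states plays no role, since deterministic dynamics guarantee those states are never reached from a recorded initial state. A slightly stronger tracking claim --- that BC follows the expert from \emph{any} recorded state, not merely recorded initial states --- also holds (merging expert rollouts remain merged under deterministic transitions), but it is not needed for the stated bound.
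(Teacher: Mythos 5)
Your proof is correct: the deterministic-tracking induction (BC replays the recorded trajectory exactly from any seen initial state, with the time-indexed counting caveat you note) plus the missing-mass bound $\sum_s p_0(s)(1-p_0(s))^m \le \min(1,|\gS|/(em))$ gives exactly the claimed $\min(H, |\gS|H/m)$ rate. Note that this paper does not prove the statement itself but imports it from \citep{xu2021nearly}; your argument is essentially the same one used there, so there is nothing further to reconcile.
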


Here the dependence on $|\gS|$ is due to the random initial states. The policy value gap becomes smaller if the initial states are limited.  We highlight that this upper bound is tighter than the general one $\widetilde{\gO}(|\gS|H^2/m)$, which are derived from MDPs with stochastic transitions \citep{rajaraman2020fundamental}. The fundamental difference is that stochastic MDPs ensure BC to have a positive probability to make a mistake even by starting from a visited initial state, which leads to the quadratic dependence on $H$. For our purpose, we know that MuJoCo locomotion tasks have (almost) deterministic transition functions and the initial states lie within a small range. Therefore, it is expected that BC (with regularization) performs well as in \cref{fig:bc_l2}.

\begin{obs}
For deterministic tasks (e.g., MuJoCo locomotion tasks), BC has no compounding errors if the provided expert trajectory are complete.
\end{obs}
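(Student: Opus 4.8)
The plan is to derive this observation directly from Theorem~\ref{theorem:bc_deterministic} together with an explicit trajectory-tracing argument. First I would fix notation: a \emph{complete} expert trajectory is a sequence $(s_0, a_0, s_1, a_1, \dots, s_{H-1}, a_{H-1})$ generated by $\piE$ in which every consecutive triple $(s_t, a_t, s_{t+1})$ is recorded, as opposed to a subsampled trajectory where only a subset of the time indices is observed. Under deterministic transitions, $p(\cdot \mid s, a)$ is a Dirac mass, so the successor state is a deterministic function $s_{t+1} = f(s_t, a_t)$, and because the expert is deterministic, the counting solution~\eqref{eq:bc_count} satisfies $\pi^{\operatorname{BC}}(a_t \mid s_t) = 1$ at every state $s_t$ appearing in the demonstration.

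The core step is an induction along the rollout of $\pi^{\operatorname{BC}}$ starting from a visited initial state $s_0$. By the observation above, $\pi^{\operatorname{BC}}$ picks $a_0$; determinism sends it to $s_1 = f(s_0, a_0)$, which by completeness is again a recorded state; so it picks $a_1$; and so on. By induction $\pi^{\operatorname{BC}}$ reproduces the entire expert trajectory and hence incurs zero decision error at every step of that rollout. This is exactly the assertion that there are \dquote{no compounding errors}: the only way BC can lose value is by starting from an \emph{unvisited} initial state, not by drifting off the expert support partway through an episode.

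Putting this together, the residual gap is controlled only by the probability mass of initial states not covered by the $m$ demonstrations, which is the mechanism behind the $\widetilde{\gO}(|\gS| H / m)$ bound of Theorem~\ref{theorem:bc_deterministic} --- linear in $H$ rather than the $\widetilde{\gO}(|\gS| H^2 / m)$ rate for stochastic transitions, where a nonzero per-step error probability re-enters even from visited initial states. I would then note that MuJoCo locomotion tasks have (essentially) deterministic dynamics and a narrow initial-state distribution, so the $|\gS|$ factor is small in practice, which explains the strong performance of regularized BC in \cref{fig:bc_l2}.

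The main obstacle, I expect, is not the induction itself but making the hypotheses airtight: (i) the completeness assumption must rule out \dquote{gaps} in the recorded trajectory, since a subsampled trajectory breaks the induction --- the $f$-successor of a recorded $(s_t, a_t)$ need no longer be a recorded state --- which is precisely the failure mode analyzed later for subsampled expert data; and (ii) one must dispatch the tabular edge cases (state revisits within a trajectory, ties in the $\argmax$), which are benign here because $\piE$ is deterministic so all recorded labels at a repeated state agree. Beyond this bookkeeping, the observation is a direct corollary of Theorem~\ref{theorem:bc_deterministic}.
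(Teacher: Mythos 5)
Your argument is essentially the paper's own: deterministic transitions plus a complete trajectory and a deterministic expert let the counting policy \eqref{eq:bc_count} replay the expert rollout exactly from any visited initial state, so the only source of value gap is unvisited initial states, which is precisely the mechanism behind the $\widetilde{\gO}(|\gS|H/m)$ bound of \cref{theorem:bc_deterministic} invoked in \cref{subsec:discussion}. Your explicit induction and the edge-case bookkeeping (subsampling breaking the successor argument, repeated states being benign) only make the paper's informal reasoning more precise, so the proposal is correct and matches the paper's approach.
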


Second, the worse dependence on the planning horizon for BC is observed based on \emph{subsampled} trajectories in \citep{ho2016gail, Kostrikov19dac}. More concretely, they sample non-consecutive transition tuples from a complete trajectory as the expert demonstrations. This is different from our experiment setting, in which we use \emph{complete} trajectories. It is obvious that subsampled trajectories artificially \dquote{mask} some states. As a result, BC is not ensured to match the expert trajectory even when the transition function is deterministic and the initial state is unique. This explains why BC performs poorly as in \citep{ho2016gail, Kostrikov19dac}. We note that this subsampling operation turns out minor for AIL methods (see the explanation in \citep{xu2021nearly}).

\section{Rethinking ValueDice Under the Online Setting}
\label{sec:online_setting}

In this section, we consider the online setting, where the agent is allowed to interact with the environment. In particular, it is empirically shown that ValueDice outperforms another off-policy AIL algorithm DAC \citep{Kostrikov19dac} in terms of environment interactions; see \citep[Figure 2]{Kostrikov20value_dice} or the reproduced results in the following \cref{fig:online_complete}. More surprisingly, ValueDice successfully performs policy optimization without the useful technique \emph{target network}. This contradicts the common sense that optimization of Bellman backup objectives (e.g., temporal difference learning and Q-learning) with function approximation can easily diverge (see the divergence example in \citep{baird1995residual, tsitsiklis1997analysis}. To address this divergence issue, \emph{target network} is proposed in \citep{mnih2015human} and this technique is widely applied in deep RL \citep{lillicrap2015ddpg, fujimoto2018td3, haarnoja2018sac}; see \citep{lee2019target, zhang2021breaking, agarwal2021online, chen2022target, li2022note} for the explanation about why the target network can address the divergence issue.

\begin{figure}[htbp]
    \centering
    \includegraphics[width=\linewidth]{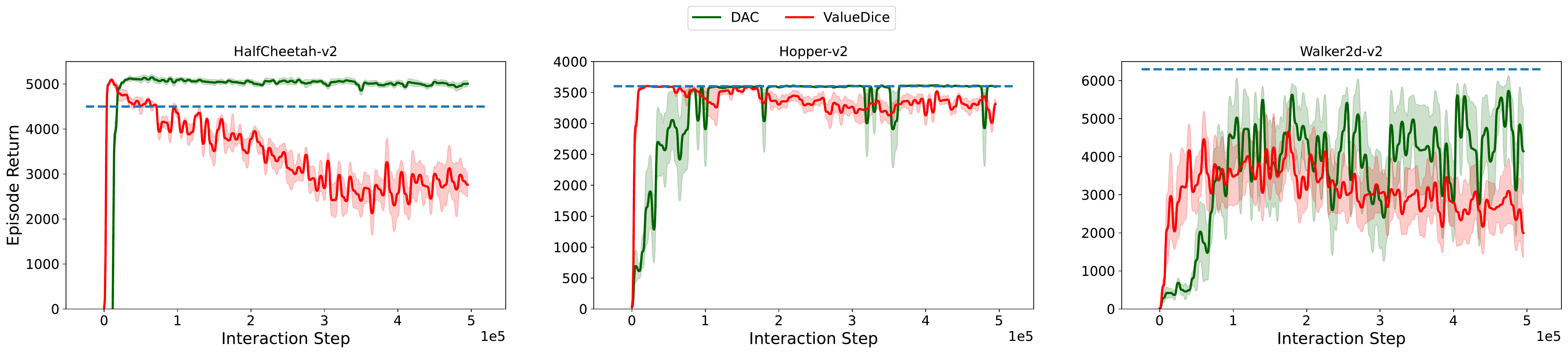}
    \caption{Comparison of ValueDice and DAC under the online setting with 1 complete expert trajectory, which reproduces \citep[Figure 2]{Kostrikov20value_dice}. Dashed lines correspond to the performance of expert policies.}
    \label{fig:online_complete}
\end{figure}

We realize that the result in \citep[Figure 2]{Kostrikov20value_dice} is based on the setting where complete expert trajectories are provided. In this case, we have known that ValueDice could degenerate to BC, and BC (with regularization) performs well. Furthermore, by comparing learning curves in \cref{fig:value_dice_reproduce} (offline) and \cref{fig:online_complete} (online), we find that the online interaction does not matter for ValueDice. This helps confirm that the reduction is crucial for ValueDice. Thus, we know the reason why ValueDice does not diverge in this case is that there is no divergence issue for BC.

Now, how about the case where expert trajectories are incomplete/subsampled? That is, the expert state-action pairs are no longer temporally consecutive, which is common in practice. The corresponding results are missing in \citep{Kostrikov20value_dice} and we provide such results in the following \cref{fig:online_subsampled}. In particular, we see that ValueDice fails but DAC still works.

\begin{figure}[htbp]
    \centering
    \includegraphics[width=\linewidth]{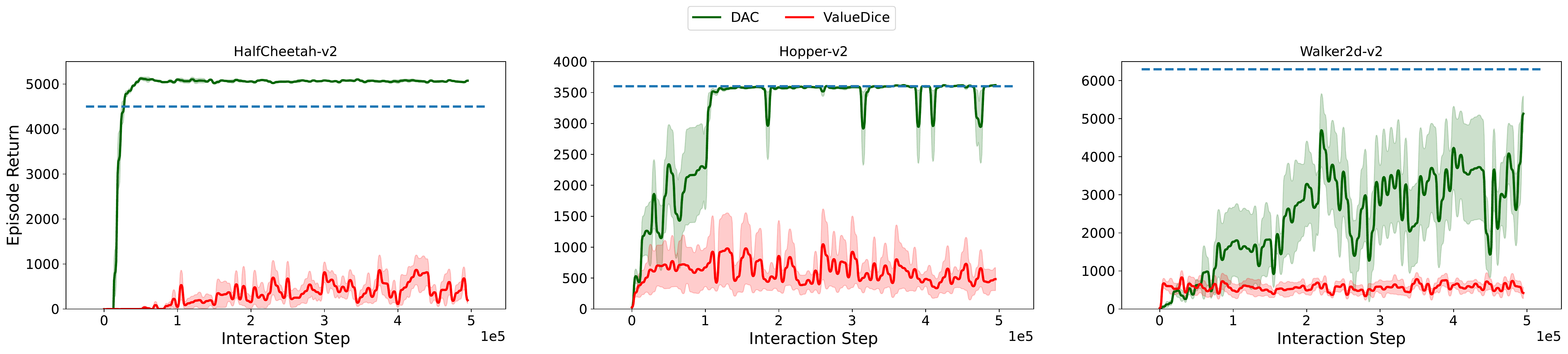}
    \caption{Comparison of ValueDice and DAC under the online setting with 10 subsampled expert trajectories (the subsampling rate is 20). Dashed lines correspond to the performance of expert policies.}
    \label{fig:online_subsampled}
\end{figure}

\begin{obs}
For MuJoCo locomotion tasks, ValueDice performs well in the complete trajectory case while it does not perform well in the subsampled case even though it can interact with the environment. 
\end{obs}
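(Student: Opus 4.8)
The plan is to argue the two clauses of the observation separately, in each case reducing to material already established in Section~3, and to supply the experimental evidence (\cref{fig:online_complete}, \cref{fig:online_subsampled}) that an Observation of this kind ultimately rests on.

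For the \emph{complete-trajectory} clause I would argue that nothing new is needed. When the single expert trajectory is temporally consecutive, the ``next state'' $s_{t+1}$ that enters the Bellman term $\gB^\pi\nu(s_t,a_t)$ of $J_{\dice}^{\offline}$ is again a visited state, so the stationarity computation behind \cref{theorem:reduction_to_bc_gamma_0} and \cref{theorem:episodic_mdp_reduction} still goes through and the counting/BC policy $\pi^{\operatorname{BC}}$ (paired with a flat dual variable) remains a global saddle point of the ValueDice objective. Hence ValueDice inherits BC's guarantee from \cref{theorem:bc_deterministic}; since MuJoCo transitions are effectively deterministic and the initial-state range is narrow, the resulting value gap $\widetilde{\gO}(|\gS|H/m)$ is small even for $m=1$. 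Under the online setting ValueDice optimizes $J_{\dice}^{\mix}$ with $\alpha\in[0,1)$, which preserves the global optimum $\pi=\piE$, so interaction should not hurt; I would then overlay \cref{fig:online_complete} on \cref{fig:value_dice_reproduce} to confirm that the online and offline curves essentially coincide, pinning down that it is the reduction, not the interaction, that drives the performance.

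For the \emph{subsampled} clause the crux is to show this reduction \emph{fails}. I would make it precise by writing out the first-order stationarity conditions of $J_{\dice}^{\offline}$ when the recorded successor of $(s_t,a_t)$ is not its true MDP successor and, typically, is not even a visited state: the Bellman term no longer telescopes, so $\nu\equiv 0$ paired with $\pi^{\operatorname{BC}}$ is no longer a critical point, and ValueDice must solve a genuine min--max with a Bellman backup nested inside the log-sum-exp. That is precisely the residual-gradient / off-policy TD-with-function-approximation regime in which divergence counterexamples are classical \citep{baird1995residual, tsitsiklis1997analysis}, and ValueDice omits the target network that \citep{mnih2015human} introduced to stabilize such updates, whereas DAC retains it. I would then present \cref{fig:online_subsampled} as the empirical realization --- ValueDice's value collapses while DAC's does not, and this persists even with $10$ subsampled trajectories, so it is not an artifact of scarce data --- and recall from \citep{xu2021nearly} that subsampling is benign for genuine AIL methods, so the contrast is attributable to the missing stabilizer rather than to any inherent hardness of the task.

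The main obstacle is the second clause: ``the BC reduction breaks'' does not by itself entail ``ValueDice fails on MuJoCo'', and a fully rigorous divergence theorem for ValueDice's particular objective on these environments is out of reach, since divergence of Bellman-backup optimization is initialization- and approximator-dependent. The honest resolution is a three-part argument: (i) a short rigorous lemma that in the subsampled case $\pi^{\operatorname{BC}}$ is no longer a stationary point of $J_{\dice}^{\offline}$ (pure calculus, analogous to the proof of \cref{theorem:reduction_to_bc_gamma_0}); (ii) the controlled experiment of \cref{fig:online_subsampled}; and (iii) a qualitative appeal to the instability literature to explain (ii). I expect (i) to be routine, (ii) to be the load-bearing evidence, and (iii) to be the step a referee is most likely to contest, since it is explanatory rather than deductive.
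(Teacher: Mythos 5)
Your proposal takes essentially the same route as the paper: the observation rests on the experiments of \cref{fig:online_complete} and \cref{fig:online_subsampled}, with the complete-trajectory case explained by the reduction of ValueDice to BC (which performs well on these near-deterministic, limited-initial-state tasks) and by the coincidence of the online and offline curves, and the subsampled case attributed to ValueDice having to optimize a genuine Bellman-backup min--max without a target network, in contrast to DAC. Your extra lemma that $\pi^{\operatorname{BC}}$ ceases to be a stationary point under subsampling is an optional strengthening the paper does not carry out (and note the paper only establishes the saddle-point reduction for $\gamma=0$ or the $\ell_1$ variant, conjecturing it for the full $\gamma>0$ objective), so be careful not to overstate that the reduction ``goes through'' verbatim for $J_{\dice}^{\offline}$ with $\gamma>0$.
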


Since ValueDice and DAC use the same principle (i.e., state-action distribution matching), we believe that the poor performance of ValueDice in \cref{fig:online_subsampled} is mainly caused by optimization issues. In terms of optimization, one major difference between ValueDice and DAC is that DAC uses the target network technique while ValueDice does not. As mentioned, without a target network, optimization of Bellman backup objectives with function approximation can easily diverge. Note that the divergence issue discussed here does not contradict the good results in \cref{fig:bc_l2} and \cref{fig:online_complete} because in that case, ValueDice is closely related to BC that performs well.

In summary, our experiment results support the claim that ValueDice cannot succeed to perform policy optimization with function approximation under the subsampled cases. Moreover, our results suggest that the mentioned success of ValueDice may rely on its connection to BC.

\section{Conclusion}
\label{sec:conclusion}

In this manuscript, we rethink the algorithm designs in ValueDice and promote new conclusions under both offline and online settings. We clarify that our results do not indicate that ValueDice is \dquote{weaker} than BC or ValueDice does not show any algorithmic insights. Instead, our studies highlight the connection between adversarial imitation learning algorithms (including ValueDice) and BC under the offline setting, point out the instability of Dice-based technique under the certain scenarios, and clarify some confusing results in the literature. We notice that the ideas from ValueDICE are still valuable, which can be extended to other imitation learning and offline reinforcement learning algorithms. Perhaps, some conclusions in this manuscript could provide insights to examine the algorithmic advances and understand the reported results in the related works such as SoftDice \citep{sun2021softdice}, OptDice \citep{lee2021optdice}, SmoDice \citep{ma2022smodice}, and DemoDice \citep{kim2021demodice}. 

For the general imitation learning studies, our work has the following implications.
\begin{itemize}
    \item \textbf{Algorithm Evaluations.} Our experiment results show a clear boundary between the complete trajectory and subsampled trajectory cases. Two cases have dramatically different characteristics, results and explanations. As a result, we must be cautious to evaluate algorithms by identifying the context.  Without this, some arguments may be misleading for future studies. 
    \item \textbf{Benchmarks.} Our study points out several drawbacks of existing MuJoCo locomotion benchmarks: deterministic transitions and limited initial states. For one thing, provided 1 complete expert trajectory, simple algorithm BC is competent for many tasks, which is not discovered in the previous research. That is, current tasks are not even \dquote{hard}. For another thing, in addition to large-scale studies in \citep{hussenot2021hyperparameter, orsini2021what} that mainly focus on existing MuJoCo locomotion benchmarks, future imitation learning studies could benefit from more challenging benchmarks with stochastic transitions and diverse initial states\footnote{{We notice that \citet{spencer2021feedback} also considered the limitation of MuJoCo locomotion benchmarks. Specifically, they argued that these benchmarks are realizable (i.e., no approximation error) and thus are not sufficient to cover some hard regimes in real-world  imitation learning applications.} }.
\end{itemize}

\section*{Acknowledgements and Disclosure of Funding}

We thank anonymous reviewers {and Gokul Swamy} for the helpful comments. The work of Yang Yu is supported by National Key R\&D Program of China National Key Research and Development Program of China (2020AAA0107200), NSFC(61876077), and Collaborative Innovation Center of Novel Software Technology and Industrialization. The work of Zhi-Quan Luo is supported by the National Natural Science Foundation of China (No. 61731018) and the Guangdong Provincial Key Laboratory of Big Data Computation Theories and Methods.

\bibliographystyle{abbrvnat}
\bibliography{reference}  

\clearpage
\newpage
\appendix 

\section{Missing Proofs}

\newcommand{\neigh}{\operatorname{neigh}}

\subsection{Episodic Markov Decision Process}

In this part, we briefly introduce the episodic Markov decision process. We remark that episodic MDP is introduced to facilitate analysis since the horizon is finite. 

An episodic Markov decision process (MDP) can be described by the tuple $\gM = (\gS, \gA, \gP, r, p_0, H)$. Here $\gS$ and $\gA$ are the state and action space, respectively. $H$ is the planning horizon\footnote{$H$ corresponds to the effective horizon $1/(1-\gamma)$ in the infinite-horizon discounted MDP.}, $p_0$ is the initial state distribution, and $r: \gS \times \gA \rar [0, 1]$ is the reward function. Different from the infinite-horizon discounted MDP in \cref{sec:background}, transition function $\gP = \{p_1, \cdots, p_{H}\}$ is non-stationary. That is, $p_h(s_{h+1}|s_h, a_h)$ determines the probability of transiting to state $s_{h+1}$ conditioned on state $s_h$ and action $a_h$ in time step $h$, for $h \in [H]$\footnote{$[x]$ denotes the set of integers from $1$ to $x$.}. Since the optimal policy is not stationary, we need to consider the  non-stationary policy $\pi = \lb \pi_1, \cdots, \pi_h \rb$, where $\pi_h: \gS \rar \Delta(\gA)$ and $\Delta(\gA)$ is the probability simplex, $\pi_h (a|s)$ gives the probability of selecting action $a$ on state $s$ in time step $h$, for $h \in [H]$. 

The policy value is defined as the cumulative rewards over $H$ time steps:
\begin{align*}
    V(\pi) := \expect \ls \sum_{t=1}^{H} r(s_t, a_t) \mid s_0 \sim p_0(\cdot), a_t \sim \pi_t(\cdot|s_t), s_{t+1} \sim p_t(\cdot|s_t, a_t)  \rs .
\end{align*}
Again, we can define the non-stationary state-action distribution $\dpi_t$:
\begin{align*}
    \dpi_t(s, a) := \sP(s_t = s, a_t=a),
\end{align*}
which characterizes the probability of visiting the state-action pair $(s, a)$ in time step $t$.

\subsection{Reduction from AIL to BC}
\label{appendix:reduction_from_ail_to_bc}

\begin{thm} \label{theorem:episodic_mdp}
For episodic MDP, consider the following optimization problem:
\begin{align*}
   \min_{\pi} \sum_{t=1}^{H} \sum_{(s, a) \in \gS \times \gA} \, \, &\labs \dpi_t(s, a) - \widehat{d^{\expert}_t}(s, a) \rabs, \\
    \text{such that} \,\, & \pi_t(a|s) = 1/|\gA|, \forall s, t \text{ with } \widehat{d^{\expert}_t}(s) = 0,
\end{align*}
Assume that there is only one expert trajectory. Then, we have that the BC's policy is the globally optimal solution.
\end{thm}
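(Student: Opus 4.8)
Write the single expert trajectory as $\tau=(s_1,a_1,\dots,s_H,a_H)$. With one trajectory, $\widehat{d^{\expert}_t}$ is the point mass at $(s_t,a_t)$ and $\widehat{d^{\expert}_t}(s)=\indict\{s=s_t\}$, so the constraint says exactly that $\pi_t(\cdot\mid s)$ is uniform on $\gA$ for every $s\neq s_t$; in particular $\pi^{\operatorname{BC}}$ is feasible, and it differs from an arbitrary feasible $\pi$ only in the conditionals $\pi_t(\cdot\mid s_t)$. The first step is to simplify the objective: since $\dpi_t$ is a probability distribution on $\gS\times\gA$, writing $p_t(\pi):=\dpi_t(s_t,a_t)$,
\begin{align*}
\sum_{(s,a)\in\gS\times\gA}\labs \dpi_t(s,a)-\widehat{d^{\expert}_t}(s,a)\rabs
\;=\;\bigl(1-p_t(\pi)\bigr)+\!\!\sum_{(s,a)\neq(s_t,a_t)}\!\!\dpi_t(s,a)\;=\;2\bigl(1-p_t(\pi)\bigr).
\end{align*}
Hence the whole objective equals $2\bigl(H-\Phi(\pi)\bigr)$ with $\Phi(\pi):=\sum_{t=1}^{H}\dpi_t(s_t,a_t)=\expect_\pi\!\bigl[\card\{t:(X_t,Y_t)=(s_t,a_t)\}\bigr]$ for a rollout $(X_t,Y_t)_{t=1}^{H}\sim\pi$, so it suffices to show that $\pi^{\operatorname{BC}}$ is the unique \emph{maximizer} of $\Phi$, i.e.\ of the expected number of time steps on which the rollout coincides with $\tau$.

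To prove that, couple a $\pi$-rollout with a $\pi^{\operatorname{BC}}$-rollout: give them the same initial state and the same randomness for the (identical, uniform) action draws at unvisited states and for the transitions. A straightforward induction shows the two rollouts agree up to and including the first step $\sigma$ at which either satisfies $X_t=s_t$ (for $t<\sigma$ both sit at a common state $x\neq s_t$, where the two policies coincide), so $\sigma$ and $X_\sigma=s_\sigma$ are common to both. From time $\sigma$ on, $\pi^{\operatorname{BC}}$ sits at $s_\sigma$, plays $a_\sigma$ and — this is the one place deterministic transitions are used — moves to the unique state consistent with the observed transition, i.e.\ $s_{\sigma+1}$; iterating, $\pi^{\operatorname{BC}}$ follows $\tau$ exactly and matches at every step $\sigma,\dots,H$, achieving $H-\sigma+1$ matches. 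For $\pi$, a match at time $t$ requires $X_t=s_t$, impossible for $t<\sigma$, so $\pi$ has at most $H-\sigma+1$ matches. Thus, pointwise under the coupling, $\pi$'s match count is $\le H-\sigma+1$, which is exactly $\pi^{\operatorname{BC}}$'s; taking expectations yields $\Phi(\pi)\le\Phi(\pi^{\operatorname{BC}})$, so $\pi^{\operatorname{BC}}$ is a global minimizer of the original objective.

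For uniqueness, equality $\Phi(\pi)=\Phi(\pi^{\operatorname{BC}})$ forces, via the pointwise bound, $\pi$ to match at all of $\sigma,\dots,H$ almost surely. Restricting to the event $\{X_1=s_1\}$, which has probability $p_0(s_1)>0$ because $\tau$ is a genuine trajectory and on which $\sigma=1$: $\pi$ must match at $t=1$, so $\pi_1(a_1\mid s_1)=1$; the deterministic transition then places the rollout at $s_2$, forcing $\pi_2(a_2\mid s_2)=1$; inductively $\pi_t(a_t\mid s_t)=1$ for all $t$, and with feasibility this gives $\pi=\pi^{\operatorname{BC}}$. The only substantive point in the argument — used both for optimality and for uniqueness — is that $\pi^{\operatorname{BC}}$, once on $\tau$, stays on $\tau$; this is where the transition structure enters, and I would state the theorem under the assumption of deterministic transitions (the relevant setting here). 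Without it the claim can fail, since a non-expert action at $s_t$ may reach $s_{t+1}$ with higher probability than $a_t$ does, so for general transitions the coupling would have to be weakened or the class of MDPs restricted; everything else above is routine bookkeeping.
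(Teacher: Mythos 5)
Your argument is correct for the version of the statement you actually prove, and it takes a genuinely different route from the paper. Both proofs start from the same reduction (each per-step $\ell_1$ term equals $2(1-\dpi_t(s_t,a_t))$, so the problem is to maximize the expected number of time steps on which a rollout matches the expert trajectory), but from there the paper runs a backward induction over time steps: it defines a cost-to-go $\ell_h$, assumes the expert action is uniquely optimal from step $h+1$ on, expands $d^{\pi}_{h'}(\widetilde s)$ for later steps $h'$, and compares the coefficients $c^{\expert}$ versus $c^{a}$ of $\pi_h(a^{\expert}\mid\widetilde s)$ and $\pi_h(a\mid\widetilde s)$ in $\ell_h$. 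You instead argue pathwise, coupling a $\pi$-rollout with a $\pi^{\operatorname{BC}}$-rollout and comparing match counts trajectory by trajectory; this gives optimality and uniqueness in one stroke and is shorter and more transparent, at the price of invoking deterministic transitions explicitly where the BC rollout locks onto the expert trajectory.

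Your caveat about determinism is not a cosmetic restriction, and it is the one substantive divergence from the paper: the theorem in the appendix is stated and "proved" for general episodic MDPs, but without deterministic transitions the statement is false, exactly for the reason you give. Concretely, take $H=3$, initial state $s_1$, expert trajectory $(s_1,a_1,s_2,a_2,s_2,a_2)$, where $(s_1,a_1)$ reaches $s_2$ with probability $0.1$ (otherwise an absorbing $s_3$), $(s_1,a_2)$ reaches $s_2$ with probability $1$, and $(s_2,a_2)$ stays at $s_2$. BC attains expected match count $1+0.1+0.1=1.2$, while the feasible policy that plays $a_2$ at $(t=1,s_1)$ attains $0+1+1=2$, so BC is not a global minimizer of the $\ell_1$ objective. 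The paper's induction step is where this is missed: the comparison $c^{\expert}<c^{a}$ credits the expert action with the immediate $+2$ from $\mathrm{Loss}_h$ but treats the future contribution as if it came from a single later time $h'$ with return probabilities bounded by $1$; summed over $h'=h+1,\dots,H$, the non-expert action's future advantage can grow like $H-h$ and overwhelm the immediate term, as in the example. Under deterministic transitions the expert action reaches every later expert state with probability $1$, so it dominates on the future terms as well, and both your coupling and the paper's coefficient comparison are then valid. So your decision to state and prove the result under deterministic transitions (the regime the paper actually cares about, cf.\ its discussion of MuJoCo in Section 3.3) is the right call; the gap lies in the paper's stated generality, not in your proof.
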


\begin{proof}
To facilitate later analysis, we will use $a^{\expert}$ to denote the expert action on a specific state $s$ (recall that we assume the expert policy is deterministic). Without loss of generality, let $\widetilde{s}$ be visited state occurred in the single expert trajectory.

We want to mimic the classical dynamic programming proof as in \citep{bertsekas2012dynamic}. Specifically, we aim to prove that the BC's policy is optimal in a backward way. That is, we will use the induction proof. To this end, we define the cost function in time step $h$ as 
\begin{align*}
   \text{Loss}_h =  \sum_{(s, a) \in \gS \times \gA} \labs \dpi_h(s, a) - \widehat{d^{\expert}_h}(s, a) \rabs.
\end{align*}
Then, we can define the cost-to-go function:
\begin{align*}
    \ell_{h} = \sum_{t=h}^{H} \sum_{(s, a) \in \gS \times \gA} \labs \dpi_t(s, a) - \widehat{d^{\expert}_t}(s, a) \rabs.
\end{align*}
For the base case where time step $h = H$, we have that 
\begin{align*}
    \ell_{H} &= \sum_{(s, a)} \labs \dpi_H(s, a) - \widehat{d^{\expert}_H}(s, a) \rabs \\
    &= \sum_{a} \labs d^{\pi}_{H}(\widetilde{s}, a) - \widehat{d^{\expert}_H}(\widetilde{s}, a) \rabs + \sum_{s \ne \widetilde{s}} \sum_{a} \labs d^{\pi}_H(s, a) -  \widehat{d^{\expert}_H}(s, a) \rabs  \\
    &=  \labs d^{\pi}_{H}(\widetilde{s}, a^{\expert}) - 1 \rabs + \sum_{a \ne a^{\expert}}  \labs d^{\pi}(\widetilde{s}, a^{\expert}) - 0 \rabs  + \sum_{s \ne \widetilde{s}} \sum_{a} \labs d^{\pi}_H(s, a) - 0 \rabs \\
    &= \labs d^{\pi}_H(\widetilde{s}) \pi_{H}(a^{\expert}| \widetilde{s}) - 1 \rabs + d^{\pi}_H(\widetilde{s}) ( 1 - \pi_H(a^{\expert} | \widetilde{s})) + \sum_{s \ne \widetilde{s}} \sum_{a} d^{\pi}_H(s) \frac{1}{|\gA|} \\
    &= 1 + d^{\pi}_H(\widetilde{s}) - 2 d^{\pi}_H(\widetilde{s}) \pi_{H}(a^{\expert}|\widetilde{s})  + \sum_{s \ne \widetilde{s}} d^{\pi}_H(s) \\
    &= 1 + d^{\pi}_H(\widetilde{s}) - 2 d^{\pi}_H(\widetilde{s}) \pi_{H}(a^{\expert}|\widetilde{s})  + (1 - d^{\pi}_H(\widetilde{s})) \\
    &= 2 \lp 1 - d^{\pi}_H(\widetilde{s}) \pi_H(a^{\expert} | \widetilde{s}) \rp,
\end{align*}
which has a globally optimal solution at $\pi_H(a^{\expert}| \widetilde{s}) = 1$. This proves the base case. 

Next, for the induction step, assume we have that 
\begin{align*}
    (\pi_{h+1}(a^{\expert} | \widetilde{s}) = 1,  \pi_{h+2}(a^{\expert}|\widetilde{s}) = 1, \cdots,  \pi_{H}(a^{\expert}|\widetilde{s}) = 1) = \argmin_{\pi_{h+1}, \pi_{h+2}, \cdots, \pi_{H} } \ell_{h+1}(\pi_{h+1}, \pi_{h+2}, \cdots, \pi_{H}),
\end{align*}
where the left hand side is the \emph{unique} globally optimal solution. We want to prove that \begin{align*}
    (\pi_{h}(a^{\expert}|\widetilde{s}) = 1,  \pi_{h+1}(a^{\expert} | \widetilde{s}) = 1,   \cdots,  \pi_{H}(a^{\expert}|\widetilde{s}) = 1) = \argmin_{\pi_h, \pi_{h+1},  \cdots, \pi_{H} } \ell_{h}(\pi_h, \pi_{h+1}, \cdots, \pi_{H}) 
\end{align*}
where the left hand side is the \emph{unique} globally optimal solution.

For time step $h$, we have that 
\begin{align*}
    \text{Loss}_h  &= \sum_{(s, a)} \labs \dpi_h(s, a) - \widehat{d^{\expert}_h}(s, a) \rabs \\
    &= 2( 1 - d^{\pi}_h(\widetilde{s}) \pi_h(a^{\expert}|\widetilde{s})).
\end{align*}
For time step $h+ 1 \leq h^\prime \leq H$, we have that 
\begin{align*}
    \text{Loss}_{h^\prime}  &= \sum_{(s, a)} \labs \dpi_{h^\prime}(s, a) - \widehat{d^{\expert}_{h^\prime}}(s, a) \rabs \\
    &= 2( 1 - d^{\pi}_{h^\prime}(\widetilde{s}) \pi_{h^\prime}(a^{\expert} | \widetilde{s})) \\
    &= 2( 1 - d^{\pi}_{h^\prime}(\widetilde{s})),
\end{align*}
where the last step follows our assumption. We have the following decomposition for $d^{\pi}_{h^\prime}(\widetilde{s})$, 
\begin{align*}
    d^{\pi}_{h^\prime}(\widetilde{s}) &= \sum_{(s, a)} p_{h^\prime - 1}(\widetilde{s}|s, a) d_{h^\prime -1}^{\pi}(s, a) \\
    &= p_{h^\prime - 1}(\widetilde{s}|\widetilde{s}, a^{\expert}) d^{\pi}_{h^\prime - 1}(\widetilde{s}, a^{\expert}) +  \sum_{a \ne a^{\expert}} p_{h^\prime - 1}(\widetilde{s} | \widetilde{s}, a) d_{h^\prime - 1}^{\pi}(\widetilde{s}, a) + \sum_{s \ne \widetilde{s}} \sum_{a} p_{h^\prime - 1}(\widetilde{s}|s, a) d^{\pi}_{h^\prime -1} (s, a) \\
    &= p_{h^\prime -1}(\widetilde{s} | \widetilde{s}, a^{\expert}) d^{\pi}_{h^\prime -1}(\widetilde{s})  \pi_{h^\prime-1}(a^{\expert} | \widetilde{s}) + \sum_{a \ne a^{\expert}}p_{h^\prime -1}(\widetilde{s} | \widetilde{s}, a) d_{h^\prime -1}(\widetilde{s}) \pi_{h^\prime -1}(a| \widetilde{s}) \\
    &\quad + \sum_{s \ne \widetilde{s}} \sum_{a} p_{h^\prime - 1}(\widetilde{s}|s, a) d^{\pi}_{h^\prime -1} (s) \frac{1}{|\gA|} \\
    &= p_{h^\prime -1 } (\widetilde{s}|\widetilde{s}, a^{\expert}) d^{\pi}_{h^\prime -1}(\widetilde{s}) + \sum_{s \ne \widetilde{s}} \sum_{a} p_{h^\prime - 1}(\widetilde{s}|s, a) d^{\pi}_{h^\prime -1} (s) \frac{1}{|\gA|},
\end{align*}
where the last equation again follows our assumption.

Finally, we should have that\footnote{We have four types of transition flows: [1] $(\widetilde{s}, a^{\expert}) \rar (\widetilde{s}, a^{\expert}) \rar \cdot \rar (\widetilde{s}, a^{\expert}) \rar \widetilde{s}$; [2] $(\widetilde{s}, a) \rar (\widetilde{s}, a^{\expert}) \rar \cdots \rar (\widetilde{s}, a^{\expert}) \rar \widetilde{s}$; [3] $(\widetilde{s}, a^{\expert}) \rar (s, a) \rar \cdots \rar (s, a) \rar \widetilde{s}$; [4] $(\widetilde{s}, a) \rar (s, a) \rar \cdots \rar (s, a) \rar \widetilde{s}$. The first two terms arise in $(a)$ and the last two terms arise in $(b)$. } $\ell_h(\pi_h) = (a) + (b) + \text{constant} $, where $\text{constant}$ is unrelated to $\pi_h(\cdot| \widetilde{s})$ and 
\begin{align*}
    &(a) = -\prod_{\ell = h + 1}^{h^\prime -1 } p_{\ell}(\widetilde{s}|\widetilde{s}, a^{\expert})  \ls p_{h}(\widetilde{s}|\widetilde{s}, a^{\expert})  \pi_h(a^{\expert} | \widetilde{s}) d^{\pi}_h(\widetilde{s}) + \sum_{a \ne a^{\expert}} p_{h}(\widetilde{s}| \widetilde{s}, a) \pi_{h}(a| \widetilde{s}) d^{\pi}_h(\widetilde{s}) \rs - 2d^{\pi}_h(\widetilde{s}) \pi_h(a^{\expert}|\widetilde{s}), \\
    &(b) =  -\sum_{s \ne \widetilde{s}} \sum_{a} \frac{1}{|\gA|}  p_{h^\prime - 1}(\widetilde{s}| s, a) \bigg[ \sP(s_{h^\prime - 1} = s|s_h = \widetilde{s}, a_h = a^{\expert}) \pi_{h} (a^{\expert} |\widetilde{s}) d^{\pi}_h(\widetilde{s}) \\
    &\qquad\qquad\qquad\qquad\qquad\qquad + \sum_{a \ne a^{\expert }} \sP(s_{h^\prime - 1} = s | s_h = \widetilde{s}, a_h = a) \pi_h(a|\widetilde{s}) d^{\pi}_h(\widetilde{s}) \bigg].
\end{align*}
Let us compare the coefficient for $\pi_h(a^{\expert} | \widetilde{s})$ and $\pi_h(a|\widetilde{s})$ with $a \ne a^{\expert}$:
\begin{align*}
    c^{\expert} &= -d^{\pi}_h(\widetilde{s}) \lb \underbrace{\prod_{\ell = h + 1}^{h^\prime -1 } p_{\ell }(\widetilde{s}|\widetilde{s}, a^{\expert}) p_{h}(\widetilde{s}|\widetilde{s}, a^{\expert})}_{ \in (0, 1]} + \underbrace{\sum_{s \ne \widetilde{s}} \sum_{a} \frac{1}{|\gA|} p_{h^\prime - 1}(\widetilde{s}| s, a)  \sP(s_{h^\prime - 1} = s|s_h = \widetilde{s}, a_h = a^{\expert})}_{\in [0, 1)} + 2 \rb, \\
    c^{a} &= -d^{\pi}_h(\widetilde{s}) \lb  \underbrace{\prod_{\ell = h + 1}^{h^\prime -1 } p_{\ell}(\widetilde{s}|\widetilde{s}, a^{\expert}) p_{h}(\widetilde{s}|\widetilde{s}, a)}_{\in [0, 1]} + \underbrace{\sum_{s \ne \widetilde{s}} \sum_{a} \frac{1}{|\gA|}  p_{h^\prime - 1}(\widetilde{s}| s, a) \sP(s_{h^\prime - 1} = s | s_h = \widetilde{s}, a_h = a)}_{\in [0, 1]} \rb.
\end{align*}
Hence, $c^{\expert} < c^{a}$ and we know that $a^{\expert}$ is the optimal action in time step $h$. This proves the induction case. 

\end{proof}

\newpage
\section{Experiment Details}
\label{appendix:experiment_details}

\textbf{Algorithm Implementation.} Our implementation of ValueDice and DAC follows the public repository \url{https://github.com/google-research/google-research/tree/master/value_dice} by \citet{Kostrikov20value_dice}. Our implementation of BC is different from the one in this repository. In particular, \citet{Kostrikov20value_dice} implemented BC with a Gaussian policy with trainable mean and covariance as in \citep[Figure 4]{Kostrikov20value_dice}. However, we observe that the performance of this implementation is very poor because the mean and covariance share the same hidden layers and the covariance affects the log-likelihood estimation. Instead, we use a simple MLP architecture without the output of the covariance. This deterministic policy is trained with mean-square-error (MSE):
\begin{align*}
   \min_{\theta} \sum_{(s, a) \sim \gD} (f_{\theta}(s) - a)^2.
\end{align*}
The hidden layer size and optimizer of our BC policy follow the configuration for ValueDice. 

\textbf{Benchmarks.} All preprocessing procedures follow \citep{Kostrikov20value_dice}. The subsampling procedure follows \citep{Kostrikov19dac}; please refer to \url{https://github.com/google-research/google-research/blob/master/dac/replay_buffer.py#L154-L177}. The expert demonstrations are from \url{https://github.com/google-research/google-research/tree/master/value_dice#install-dependencies}.

\textbf{Experiments.} All algorithms are run with 5 random seeds (2021-2025). For all plots, solid lines correspond to the mean, and shaded regions correspond to the standard deviation.

\end{document}